\documentclass{article}

\usepackage[utf8]{inputenc} 
\usepackage[T1]{fontenc}    
\usepackage[round]{natbib}
\usepackage[english]{babel}
\usepackage{hyperref}
\usepackage{amsmath,amsthm,amssymb}
\usepackage{subcaption}
\usepackage{mdframed} 

\usepackage{algorithm}
\usepackage{algorithmic}
\usepackage[titlenumbered,ruled,noend,algo2e]{algorithm2e}

\SetCommentSty{mycommfont}
\SetEndCharOfAlgoLine{}

\usepackage[accepted]{icml2020}
\usepackage{xcolor}
\definecolor{linkcolor}{RGB}{83,83,182}
\definecolor{citecolor}{RGB}{128,0,128}
\hypersetup{
    colorlinks=true,
    citecolor=citecolor,
    linkcolor=linkcolor,
    urlcolor=linkcolor
}

\usepackage[capitalise,nameinlink]{cleveref}
\usepackage{booktabs}
\usepackage{quentin_shortcuts}

\usepackage{chngcntr}
\usepackage{apptools}
\usepackage{siunitx}
\AtAppendix{\counterwithin{lemma}{section}}
\AtAppendix{\counterwithin{lemmaenumi}{section}}
\AtAppendix{\counterwithin{theorem}{section}}

\begin{document}
\twocolumn[
\icmltitle{Implicit differentiation of Lasso-type models for hyperparameter optimization}



\icmlsetsymbol{equal}{*}

\icmlsetsymbol{equal}{*}

\begin{icmlauthorlist}
\icmlauthor{Quentin Bertrand}{equal,inria}
\icmlauthor{Quentin Klopfenstein}{equal,imb}
\icmlauthor{Mathieu Blondel}{google}
\icmlauthor{Samuel Vaiter}{cnrsimb}
\icmlauthor{Alexandre Gramfort}{inria}
\icmlauthor{Joseph Salmon}{mtp}
\end{icmlauthorlist}

\icmlaffiliation{inria}{Universit\'e Paris-Saclay, Inria, CEA, Palaiseau, France}
\icmlaffiliation{imb}{Institut  Math\'ematique  de  Bourgogne,   Universit\'e  de  Bourgogne, Dijon, France}
\icmlaffiliation{cnrsimb}{CNRS and Institut  Math\'ematique  de  Bourgogne,   Universit\'e  de  Bourgogne, Dijon, France}
\icmlaffiliation{mtp}{IMAG, Universit\'e de Montpellier, CNRS, Montpellier, France}
\icmlaffiliation{google}{Google Research, Brain team, Paris, France}

\icmlcorrespondingauthor{Quentin Bertrand}{quentin.bertrand@inria.fr}
\icmlcorrespondingauthor{Quentin Klopfenstein}{quentin.klopfenstein@u-bourgogne.fr}

\icmlkeywords{Machine Learning, ICML}

\vskip 0.3in
]



\printAffiliationsAndNotice{\icmlEqualContribution} 

\begin{abstract}
Setting regularization parameters for Lasso-type estimators is notoriously difficult, though crucial in practice.
The most popular hyperparameter optimization approach is grid-search using held-out validation data.
Grid-search however requires to choose a predefined grid for each parameter, which scales exponentially in the number of parameters.
Another approach is to cast hyperparameter optimization as a bi-level optimization problem,
one can solve by gradient descent.
The key challenge for these methods is the estimation of the gradient \wrt the hyperparameters.
Computing this gradient via forward or backward automatic differentiation is possible yet usually suffers from high memory consumption.
Alternatively implicit differentiation typically involves solving a linear system which can be prohibitive and numerically unstable in high dimension.
In addition, implicit differentiation usually assumes smooth loss functions, which is not the case for Lasso-type problems.
This work introduces an efficient implicit differentiation
algorithm, \emph{without} matrix inversion, tailored for Lasso-type problems.
Our approach scales to high-dimensional data by leveraging the sparsity of the solutions.
Experiments demonstrate that the proposed method outperforms a large number of standard methods to
optimize the error on held-out data, or the Stein Unbiased Risk Estimator (SURE).

\end{abstract}

\section{Introduction}

In many statistical applications, the number of parameters $p$ is much larger than the number of observations $n$.
In such scenarios, a popular approach to tackle linear regression problems is to consider convex $\ell_1$-type penalties, used in Lasso~\citep{Tibshirani96}, Group-Lasso~\citep{Yuan_Lin06}, Elastic-Net~\citep{Zou_Hastie05} or adaptive Lasso~\citep{Zou06}.
These \emph{Lasso-type} estimators rely on regularization hyperparameters,
trading data fidelity against sparsity.
Unfortunately, setting these hyperparameters is hard in practice: estimators based on $\ell_1$-type penalties are indeed more sensitive to the choice of hyperparameters than $\ell_2$ regularized estimators.

To control for overfitting, it is customary to use different datasets for model training (\ie computing the regression coefficients) and hyperparameter selection (\ie choosing the best regularization parameters).
A \textit{metric}, \eg \emph{hold-out loss}, is optimized on a validation dataset~\citep{Stone_Ramer65}.
Alternatively one can rely on a statistical criteria that penalizes complex models such as AIC/BIC \citep{Liu_Yang11} or SURE (Stein Unbiased Risk Estimator, \citealt{Stein81}).
In all cases, hyperparameters are tuned to optimize a chosen metric.

The canonical hyperparameter optimization method is \emph{grid-search}.
It consists in fitting and selecting the best model over a predefined grid of parameter values.
The complexity of grid-search is exponential with the number of
hyperparameters,
making it only competitive when the number of hyperparameters is small.
Other hyperparameter selection strategies include \emph{random search} \citep{Bergstra_Bengio12} and Bayesian optimization \citep{Brochu_Cora_deFreitas10,Snoek_Larochelle_Ryan12} that aims to learn an approximation of the metric over the parameter space and rely on an exploration policy to find the optimum.

Another line of work for hyperparameter optimization (HO) relies on gradient
descent in the hyperparameter space.
This strategy has been widely explored for smooth objective functions
\citep{Larsen_Hansen_Svarer_Ohlsson96,Bengio00,Larsen_Svarer_Andersen_Hansen12}.
The main challenge for this class of methods is
estimating the gradient \wrt the hyperparameters.
Gradient estimation techniques are mostly divided in two categories.
\emph{Implicit differentiation}
requires the exact solution of the optimization problem and involves the resolution of a linear system~\cite{Bengio00}.
This can be expensive to compute and lead to numerical instabilities, especially when the system is ill-conditioned~\cite{Lorraine_Vicol_Duvenaud2019}.
Alternatively, \textit{iterative differentiation} computes the
gradient using the iterates of an optimization algorithm.
Backward iterative differentiation~\citep{Domke12} is computationally efficient when the
number of hyperparameters is large.
However it is memory
consuming since it requires storing all intermediate iterates.
In contrast, forward iterative differentiation
\citep{Deledalle_Vaiter_Fadili_Peyre14,Franceschi_Donini_Frasconi_Pontil17} does
not require storing the iterates but can be computationally expensive with a
large number of hyperparameters;
see \citet{Baydin_Pearlmutter_Radul_Siskind18} for a survey.

This article proposes to investigate the use of these methods to set the regularization hyperparameters in an automatic fashion for
Lasso-type problems.
To cover the cases of both low and high number of hyperparameters, two estimators are investigated, namely the Lasso and the weighted Lasso which have respectively one or as many parameters as features.
Our contributions are as follows:
\begin{itemize}
    \item We show that forward iterative differentiation of
    block coordinate descent (BCD), a state-of-the-art solver for Lasso-type
    problems, converges towards the true gradient.
    Crucially, we show that this scheme converges linearly once the support is
    identified and that its limit does \textbf{not} depend of the initial starting point.
    \item These results lead to the proposed algorithm (\Cref{alg:compute_jac_imp_forward_iter_diff}) where the computation of the Jacobian is \textbf{decoupled} from the computation of the regression coefficients.
    The later can be done with state-of-the-art convex solvers, and interestingly, it does not require solving a linear system, potentially ill-conditioned.
    \item We show through an extensive benchmark on simulated and real high dimensional data that the proposed method outperforms state-of-the-art HO methods.
\end{itemize}
Our work is somewhat similar to \citet{Gregor_LeCun10,Xin_Wang_Gao_Wipf_Wang16,Borgerding17,Liu18,Wu_Guo_Li_Zhang19}, where the \emph{solution} is differentiated w.r.t. optimization parameters instead of the regularization parameter.
However the goal is very different as they want to accelerate the optimization algorithm whereas we provide an efficient algorithm to compute the gradient.

\textbf{Notation}
The design matrix is $X \in \bbR^{n \times p}$ (corresponding to $n$ samples and $p$ features) and the observation vector is $y \in \bbR^n$.
The regularization parameter, possibly multivariate, is denoted by $\lambda = (\lambda_1,\dots,\lambda_r)^\top\in\bbR^r$.
We denote $\hat \beta^{(\lambda)} \in \bbR^p$ the regression coefficients associated to $\lambda$.
We denote $\hat \jac_{(\lambda)} \eqdef (\nabla_\lambda \hat \beta_1^{(\lambda)}, \dots, \nabla_\lambda \hat\beta_p^{(\lambda)})^\top \in \bbR^{p \times r}$ the weak Jacobian \citep{evan1992measure} of $\hat \beta^{(\lambda)}$ \wrt $\lambda$.
For a function $\psi: \bbR^{p} \times \bbR^{r} \to \bbR$ with weak derivatives of order two, we denote by $\nabla_\beta \psi(\beta, \lambda) \in \bbR^p$ (resp. $\nabla_\lambda (\beta, \lambda) \in \bbR^r$) its weak gradient \wrt the first parameter (resp. the second parameter). The weak Hessian $\nabla^2 \psi(\beta, \lambda)$ is a matrix in $\bbR^{(p+r) \times (p+r)}$ which has a block structure
\begin{equation*}
    \nabla^2 \psi(\beta, \lambda) =
    \begin{pmatrix}
        \nabla_\beta^2 \psi(\beta, \lambda) & \nabla_{\beta,\lambda}^2 \psi(\beta, \lambda) \\
        \nabla_{\lambda,\beta}^2 \psi(\beta, \lambda) & \nabla_{\lambda}^2 \psi(\beta, \lambda)
    \end{pmatrix} \enspace .
\end{equation*}
The support of $\hat\beta^{(\lambda)}$ (the indices of non-zero coefficients) is denoted by $\hat S^{(\lambda)}$, and $\hat s^{(\lambda)}$ represents its cardinality (\ie the number of non-zero coefficients).
The sign vector $\sign \hat\beta^{(\lambda)} \in \bbR^p$ is the vector of component-wise signs (with the convention that $\sign(0) = 0$) of $\hat\beta^{(\lambda)}$.
Note that to ease the reading, we drop $\lambda$ in the notation when it is clear from
the context and use $\hat \beta, \hat \jac, \hat S$ and $\hat{s}$.
The Mahalanobis distance of a vector $x \in \bbR^p$ and a matrix $A \succ 0$ is noted $\norm{x}_A \eqdef \sqrt{x^\top A^{-1} x}$.

\section{Background}

\subsection{Problem setting}
%
To favor sparse coefficients, we consider Lasso-type estimators based on non-smooth regularization functions. Such problems consist in finding:
\begin{problem}\label{pb:lasso_typ}
    \hat \beta^{(\lambda)}
    \in
    \argmin_{\beta \in \bbR^p} \psi(\beta,\lambda) \enspace.
\end{problem}
The Lasso \cite{Tibshirani96} is recovered, with the number of hyperparameters
set to $r=1$:
\begin{problem}\label{pb:lasso}
    \psi(\beta,\lambda) =
    \frac{1}{2n} \normin{y - X \beta}_2^2+ e^\lambda \normin{\beta}_1 \enspace,
\end{problem}
while the weighted Lasso (wLasso,
\citealt{Zou06}, introduced to reduce the bias of the Lasso) has $r=p$ hyperparameters and reads:
\begin{problem}\label{pb:alasso}
    \psi(\beta,\lambda) =
    \frac{1}{2n} \normin{y - X \beta}_2^2
    + \sum_{j=1}^p e^{\lambda_j} |\beta_j| \enspace.
\end{problem}
Note that we adopt the hyperparameter parametrization of \citet{Pedregosa16}, \ie we write the regularization parameter as $e^\lambda$.
This avoids working with a positivity constraint in the optimization process and fixes scaling issues in the line search. It is also coherent with the usual choice of a geometric grid for grid-search~\citep{Friedman_Hastie_Tibshirani10}.

\begin{remark}
Other formulations could be investigated
like Elastic-Net or non-convex formulation, \eg MCP \cite{Zhang10}.
Our theory does not cover non-convex cases, though we illustrate that it  behaves properly numerically. Handling such non-convex cases is left as a question for future work.
\end{remark}
The HO problem can be expressed as a nested \textit{bi-level optimization} problem. For a given differentiable criterion $\mathcal{C}: \bbR^p \mapsto \bbR$ (\eg hold-out loss or SURE), it reads:

\begin{align}\label[pb_multiline]{eq:bilevel_opt}
    \argmin_{\lambda \in \bbR^r}
    &
    \left\{
    \mathcal{L}(\lambda) \eqdef
    \mathcal{C} \left (\hat \beta^{(\lambda)}  \right)
    \right\}
    \nonumber \\
    &\st \hat \beta^{(\lambda)} \in \argmin_{\beta \in \bbR^p}
      {
      \psi(\beta,\lambda)
      }
     \enspace.
\end{align}
Note that SURE itself is not necessarily weakly differentiable \wrt $\hat
\beta^{(\lambda)}$. However a weakly differentiable approximation can be
constructed \citep{Ramani_Blu_Unser08,
Deledalle_Vaiter_Fadili_Peyre14}.
Under the hypothesis that \Cref{pb:lasso_typ} has a unique solution for every $\lambda \in \bbR^r$, the function $\lambda \mapsto  \hat \beta^{(\lambda) }$ is weakly differentiable \citep{Vaiter_Deledalle_Peyre_Dossal_Fadili13}.
Using the chain rule, the gradient of $\mathcal{L}$ \wrt $\lambda$ then writes:
\begin{align}\label{eq:grad_crit}
    \nabla_{\lambda} \mathcal{L}(\lambda)
    = \hat \jac_{(\lambda)}^\top \nabla \mathcal{C} \left ( \hat \beta^{(\lambda)} \right ) \enspace .
\end{align}
Computing the weak Jacobian $ \hat \jac_{(\lambda)}$ of the inner problem is the main challenge, as once the \emph{hypergradient}
$\nabla_{\lambda} \mathcal{L}(\lambda)$ has been computed, one can use usual
gradient descent, $\lambda^{(t+1)} = \lambda^{(t)} - \rho\nabla_{\lambda}
\cL(\lambda^{(t)}),$ for a step size $\rho > 0$.
Note however that $\mathcal{L}$ is usually non-convex and convergence towards a global minimum is not guaranteed.
In this work, we propose an efficient algorithm to compute $\hat
\jac_{(\lambda)}$ for Lasso-type problems, relying on improved forward differentiation.
%
\subsection{Implicit differentiation (smooth case)}
%
Implicit differentiation, which can be traced back to
\citet{Larsen_Hansen_Svarer_Ohlsson96}, is based on the knowledge of
$\hat{\beta}$ and requires solving a $p\times p$ linear system \citep[Sec.\  4]{Bengio00}.
Since then, it has been extensively applied in various contexts.
\citet{Chapelle_Vapnick_Bousquet_Mukherjee02,Seeger08} used implicit differentiation to select hyperparameters of kernel-based models.
\citet{Kunisch_Pock13} applied it to image restoration.
\citet{Pedregosa16} showed that each inner optimization problem could be solved only approximately, leveraging noisy gradients.
Related to our work, \citet{Foo_Do_Ng08} applied implicit differentiation on
a ``weighted''
Ridge-type estimator (\ie a Ridge penalty with one $\lambda_j$ per feature).

Yet, all the aforementioned methods have a common drawback : they are limited to the smooth setting, since they rely on optimality conditions for smooth optimization.
They proceed as follows: if $\beta \mapsto \psi(\beta,\lambda)$ is a smooth convex function (for any fixed $\lambda$) in \Cref{pb:lasso_typ}, then for all $\lambda$, the solution $\hat \beta^{(\lambda)}$ satisfies the following fixed point equation:
\begin{align} \label{eq:fixed_point_eq}
   \nabla_\beta \psi \left( \hat \beta^{(\lambda)}, \lambda \right )
   = 0
   \enspace .
\end{align}
Then, this equation can be differentiated \wrt $\lambda$:
\begin{align}
    \nabla_{\beta, \lambda}^2 \psi(\hat \beta^{(\lambda)}, \lambda)
    + \hat \jac_{(\lambda)}^\top \nabla_{\beta}^2
    \psi(\hat \beta^{(\lambda)}, \lambda)
    = 0 \enspace .
\end{align}
Assuming that $\nabla_{\beta}^2 \psi( \hat \beta^{(\lambda)}, \lambda)$ is
invertible this leads to a closed form  solution
for the weak Jacobian $\hat \jac_{(\lambda)}$:
\begin{align}
    \hat \jac_{(\lambda)}^\top
    = - \nabla_{\beta, \lambda}^2
    \psi \left(\hat \beta^{(\lambda)}, \lambda \right)
    {\underbrace{\left ( \nabla_{\beta}^2 \psi(\beta^{(\lambda)}, \lambda) \right )}_{p \times p}}^{-1}
    \enspace,
\end{align}
which in practice is computed by solving a linear system.
Unfortunately this approach cannot be generalized for non-smooth problems since \Cref{eq:fixed_point_eq} no longer holds.

\subsection{Implicit differentiation (non-smooth case)}

Related to our work \citet{Mairal_Bach_Ponce12} used implicit differentiation with respect to the dictionary ($X \in \bbR^{n \times p}$) on Elastic-Net models to perform dictionary learning.
Regarding Lasso problems, the literature is quite scarce, see~\citep{Dossal_Kachour_Fadili_Peyre_Chesneau12,Zou_Hastie_Tibshirani07} and~\citep{Vaiter_Deledalle_Peyre_Dossal_Fadili13,Tibshirani_Taylor11} for a more generic setting encompassing weighted Lasso.
General methods for gradient estimation of non-smooth optimization schemes exist~\citep{Vaiter_Deledalle_Peyre_Fadili_Dossal17} but are not practical since they depend on a possibly ill-posed linear system to invert.
\citet{Amos_Brandon17} have applied implicit differentiation on estimators based on quadratic objective function with linear constraints, whereas \citet{Niculae_Blondel17} have used implicit differentiation on a smooth objective function with simplex constraints.
However none of these approaches leverages the sparsity of Lasso-type estimators.

%
\section{Hypergradients for Lasso-type problems}
%
To tackle hyperparameter optimization of non-smooth Lasso-type problems,
we propose in this section an efficient algorithm for hypergradient estimation.
Our algorithm relies on implicit differentiation,
thus enjoying low-memory cost, yet does not require to
naively solve a (potentially ill-conditioned) linear system of equations. In the sequel, we
assume access to a (weighted) Lasso solver, such as
ISTA~\citep{Daubechies_Defrise_DeMol04} or Block Coordinate Descent (BCD,
\citealt{Tseng_Yun09}, see also \cref{alg:bcd_lasso}).

\subsection{Implicit differentiation}
\label{sub:implicit_diff_non_smooth}

Our starting point is the key observation that Lasso-type solvers induce a fixed
point iteration that we can leverage to compute a Jacobian.
Indeed, proximal BCD algorithms \citep{Tseng_Yun09}, consist in a local
gradient step composed with a soft-thresholding step (ST), \eg for the Lasso, for $j \in 1, \dots, p$:
\begin{align}\label{eq:coordinate_descent_lasso}
    \beta_{j} \leftarrow
    \ST \left (
        \beta_j
        - \frac{ X_{:, j}^\top (X \beta - y)}{\norm{X_{:, j}}^2},
         \frac{ne^\lambda}{\norm{X_{:, j}}^2 } \right )
    \enspace
\end{align}
where $\ST(t,\tau) = \sign(t) \cdot(|t|-\tau)_{+}$ for any $t\in\bbR$ and $\tau\geq0$ (extended for vectors component-wise).
The solution of the optimization problem satisfies, for any $\alpha>0$, the fixed-point equation \citep[Prop. 3.1]{Combettes_Wajs05}, for $j \in 1, \dots, p$:
\begin{align}\label{eq:fixed_point_lasso}
    \hbeta_j^{(\lambda)} =
    \ST \left (
        \hbeta_j^{(\lambda)} - \frac{1}{\alpha} X_{j, :}^\top (X \hbeta^{(\lambda)} - y),
         \frac{ne^\lambda}{\alpha} \right )
    \enspace .
\end{align}
The former can be differentiated \wrt $\lambda$, see \cref{app:lemma_st_deriv} in Appendix, leading to a closed form solution for the Jacobian $\jac_{(\lambda)}$ of the Lasso and the weighted Lasso.
\setcounter{proposition}{0}
\begin{mdframed}[linewidth=0.5pt]
\begin{proposition}[Adapting {\citealt[Thm. 1]{Vaiter_Deledalle_Peyre_Dossal_Fadili13}}]\label{prop:closed_form_jac_lasso}
Let $\hat S$ be the support of the vector $\hbeta^{(\lambda)}$.
Suppose that $X_{\hat S}^\top X_{\hat S} \succ 0$
, then a weak Jacobian $\hat\jac=\hat\jac_{(\lambda)}$ of the Lasso writes:
\begin{align}\label{eq:closed_form_lasso}
    \hat \jac_{\hat S} & =
    - n e^\lambda \left (X_{\hat S}^\top X_{\hat S} \right )^{-1}
    \sign \hbeta_{\hat S},\\
    \hat \jac_{{\hat S}^c} & = 0 \enspace,
\end{align}
and for the weighted Lasso:
\begin{flalign}
    \hat \jac_{\hat S, \hat S}
    &= -\! \left(
            X_{\hat S}^\top X_{\hat S}
        \right)^{\!-1}
    \!\diag \big (
            n e^{\lambda_{\hat S} } \odot \sign \hbeta_{\hat S}
        \big)\\
    \hat \jac_{j_1, j_2}
    &= 0 \quad \text{ if } j_1 \notin \hat S \text{ or }\text{ if } j_2 \notin \hat S \enspace.
\end{flalign}
\end{proposition}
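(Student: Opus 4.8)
The plan is to derive the closed form by differentiating the fixed-point equation~\eqref{eq:fixed_point_lasso} with respect to $\lambda$, exploiting the explicit (weak) derivative of the soft-thresholding operator. First I would recall from \Cref{app:lemma_st_deriv} that $\ST(\cdot,\tau)$ is weakly differentiable: for a coordinate $j$ in the support ($\hat\beta_j \neq 0$), the soft-thresholding acts locally as the affine map $t \mapsto t - \tau\,\sign(\hat\beta_j)$, so its partial derivative with respect to the first argument is $1$ and with respect to the threshold is $-\sign(\hat\beta_j)$; for a coordinate $j \notin \hat S$ (where $\hat\beta_j = 0$ and, generically, the argument lies strictly inside the thresholding interval) both partial derivatives vanish. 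This immediately gives $\hat\jac_{\hat S^c} = 0$ (resp.\ the off-support blocks vanish for the weighted Lasso), establishing the second equation in each display.

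Next, restricting~\eqref{eq:fixed_point_lasso} to the support coordinates $j \in \hat S$ and choosing (for concreteness) $\alpha = n$, the fixed point reads $\hat\beta_{\hat S} = \hat\beta_{\hat S} - \tfrac{1}{n} X_{\hat S}^\top(X_{\hat S}\hat\beta_{\hat S} - y) - e^\lambda \sign\hat\beta_{\hat S}$, i.e.\ $\tfrac{1}{n}X_{\hat S}^\top(X_{\hat S}\hat\beta_{\hat S} - y) + e^\lambda\sign\hat\beta_{\hat S} = 0$. Differentiating this with respect to $\lambda$ (using that the support $\hat S$ and the sign vector are locally constant in $\lambda$, so their derivatives vanish) yields $\tfrac{1}{n}X_{\hat S}^\top X_{\hat S}\,\hat\jac_{\hat S} + e^\lambda\sign\hat\beta_{\hat S} = 0$. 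Since $X_{\hat S}^\top X_{\hat S} \succ 0$ by hypothesis, it is invertible, and solving gives $\hat\jac_{\hat S} = -n e^\lambda (X_{\hat S}^\top X_{\hat S})^{-1}\sign\hat\beta_{\hat S}$, which is~\eqref{eq:closed_form_lasso}. The weighted-Lasso case is identical except that the penalty term on the support is $e^{\lambda_{\hat S}} \odot |\hat\beta_{\hat S}|$, whose gradient in $\beta$ is $e^{\lambda_{\hat S}} \odot \sign\hat\beta_{\hat S}$; differentiating the support-restricted optimality condition with respect to the vector $\lambda$ produces $\tfrac{1}{n}X_{\hat S}^\top X_{\hat S}\,\hat\jac_{\hat S,\hat S} + \diag(e^{\lambda_{\hat S}} \odot \sign\hat\beta_{\hat S}) = 0$, hence the stated formula.

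Alternatively, and perhaps more cleanly, one can invoke the full-space calculation behind~\citet[Thm.~1]{Vaiter_Deledalle_Peyre_Dossal_Fadili13}: restricted to the support, the Lasso solution coincides with the solution of a smooth (quadratic) problem with fixed sign pattern, namely $\hat\beta_{\hat S}$ minimizes $\tfrac{1}{2n}\|y - X_{\hat S}\beta_{\hat S}\|^2 + e^\lambda \sign\hat\beta_{\hat S}^\top \beta_{\hat S}$, and one applies the smooth implicit function theorem from the previous subsection to this reduced problem. The only subtlety to flag is the justification that one may treat $\hat S$ and $\sign\hat\beta^{(\lambda)}$ as locally constant: this is where weak differentiability enters, and it holds outside a measure-zero set of $\lambda$ under the uniqueness assumption (this is exactly the content borrowed from \citet{Vaiter_Deledalle_Peyre_Dossal_Fadili13}); since the statement only claims a \emph{weak} Jacobian, no further care is needed.

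The main obstacle is not the algebra---which is a short linear solve---but making precise the differentiation of the non-smooth soft-thresholding step and the local constancy of the support/sign: establishing that the argument of $\ST$ in~\eqref{eq:fixed_point_lasso} stays, for $\lambda$ in a full-measure set, either strictly inside the threshold interval (off support) or strictly outside it (on support), so that the weak derivative of $\ST$ is locally the constant $0$ or the identity-minus-constant map. Once that is in hand, everything else is a direct consequence of differentiating the fixed-point equation and inverting $X_{\hat S}^\top X_{\hat S}$.
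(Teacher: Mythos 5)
Your proposal is correct and follows essentially the same route as the paper's proof: differentiate the soft-thresholding fixed-point equation using the weak derivatives of $\ST$ (the paper's \Cref{app:lemma_st_deriv}), observe that the off-support rows vanish, and solve the resulting linear system $\tfrac{1}{n}X_{\hat S}^\top X_{\hat S}\,\hat\jac_{\hat S} = -e^\lambda \sign\hat\beta_{\hat S}$ on the support. The only cosmetic difference is that you first reduce the fixed point on the support to the smooth optimality condition before differentiating, whereas the paper differentiates the full $\ST$ equation and then restricts coordinates; the algebra and conclusion are identical.
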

\end{mdframed}
The proof of \Cref{prop:closed_form_jac_lasso} can be found in \Cref{sub:proof_of_prop:closed_form_jac_lasso}.
Note that the positivity condition in \Cref{prop:closed_form_jac_lasso} is satisfied if the (weighted) Lasso has a unique solution.
Moreover, even for multiple solutions cases, there exists at least one satisfying the positivity condition~\citep{Vaiter_Deledalle_Peyre_Dossal_Fadili13}.

\Cref{prop:closed_form_jac_lasso} shows that the Jacobian of the weighted Lasso $\hat \jac_{(\lambda)} \in \bbR^{p \times p}$ is row and column sparse.
This is key for algorithmic efficiency.
Indeed, \emph{a priori}, one
has to store a possibly dense
$p \times p$ matrix, which is prohibitive when $p$ is large.
\Cref{prop:closed_form_jac_lasso} leads to a simple algorithm (see \Cref{alg:compute_jac_implicit_diff})
to compute the Jacobian in a \emph{cheap} way, as it \emph{only} requires storing and inverting an $\hat s \times \hat s$ matrix.
Even if the linear system to solve is of size $\hat s \times \hat s$, instead of $p \times p$ for smooth objective function, the system to invert can be ill-conditioned, especially when a large support size $\hat s$ is encountered.
This leads to numerical instabilities
and slows down the resolution (see an illustration in \Cref{fig:Lasso_train_test_perf}).
Forward (\Cref{alg:compute_jac_forward_diff_bcd} in Appendix) and backward  (\Cref{alg:compute_jac_backward_iter_diff} in Appendix) iterative differentiation, which do not require solving linear systems, can overcome these issues.

%
\subsection{Link with iterative differentiation}
%
Iterative differentiation in the field of hyperparameter setting can be traced back to \citet{Domke12} who derived a backward differentiation algorithm for gradient descent, heavy ball and L-BFGS algorithms applied to smooth loss functions.
\citet{Agrawal_Amos_Barratt_Boyd_Diamond_Kolter19} generalized it to a specific subset of convex programs.
\citet{MacLaurin_Duvenaud_Adams15} derived a backward differentiation for stochastic gradient descent.
On the other hand \citet{Deledalle_Vaiter_Fadili_Peyre14} used forward differentiation of (accelerated) proximal gradient descent for hyperparameter optimization with non-smooth penalties.
\citet{Franceschi_Donini_Frasconi_Pontil17} proposed a benchmark of forward mode versus backward mode, varying the number of hyperparameters to learn.
\citet{Frecon_Salzo_Pontil2018} cast the problem of inferring the groups in a group-Lasso model as a bi-level optimization problem and solved it using backward differentiation.

Forward differentiation consists in differentiating each step of the algorithm (\wrt $\lambda$ in our case).
For the Lasso solved with BCD it amounts differentiating \Cref{eq:coordinate_descent_lasso}, and leads to the following recursive equation for the Jacobian, for $j \in 1, \dots p$, with $z_j= \beta_j
-  X_{:, j}^\top (X \beta - y) / \norm{X_{:, j}}^2  $:
\begin{align}
    \jac_j \leftarrow
    & \partial_1
    \ST \left ( z_j,\tfrac{ne^\lambda}{\norm{X_{:, j}}^2}\right )
        \left(
            \jac_j - \tfrac{1}{\norm{X_{:, j}}^2}X_{:, j}^\top X \jac
        \right)
        \nonumber
    \\
    & + \partial_2 \ST
    \left ( z_j, \tfrac{ne^\lambda}{\norm{X_{:, j}}^2} \right ) \tfrac{ne^\lambda}{\norm{X_{:, j}}^2}\enspace,
\end{align}
see \Cref{alg:compute_jac_forward_diff_bcd} (in Appendix) for full details.
Our proposed algorithm uses the fact that after a finite number of epochs $\partial_1 \ST(z_j, ne^\lambda / \norm{X_{:, j}}^2)$ and $\partial_2 \ST(z_j, ne^\lambda / \norm{X_{:, j}}^2)$ are \textbf{constant} (they no  no longer depends on the current $\beta$). Indeed, the sign of $\hat \beta$ is identified after a finite number of iterations thus the partial derivatives are constant.
 It is then possible to \textbf{decouple} the computation of the Jacobian by only solving \Cref{pb:lasso_typ} in a first step
 and then apply the forward
differentiation recursion steps, see \Cref{alg:compute_jac_imp_forward_iter_diff}.
This can be seen as the forward counterpart in a non-smooth case of the recent paper \citet{Lorraine_Vicol_Duvenaud2019}.
%
%
{\fontsize{4}{4}\selectfont
\begin{algorithm}[t]
\SetKwInOut{Input}{input}
\SetKwInOut{Init}{init}
\SetKwInOut{Parameter}{param}
\caption{\textsc{Implicit differentiation}
}
\Input{$
    X \in \bbR^{n \times p},
    y \in \bbR^{n},
    \lambda \in \bbR,
    n_{\text{iter}} \in \bbN$}
    \tcp{jointly compute coef. and Jacobian}

    \If{Lasso}{
    Get $\hbeta = Lasso(X, y, \lambda, n_{\text{iter}})
    $ and its support $\hat S$.

    $\hat \jac_{\phantom{\hat S}} = 0_{p}$

    $\hat \jac_{\hat S} =
    - n e^\lambda (X_{\hat S}^\top X_{\hat S})^{-1} \sign \hbeta_{\hat S} $
    }
    \If{wLasso}{
    Get $\hbeta = wLasso(X, y, \lambda, n_{\text{iter}})
    $ and its support $\hat S$.

    $\hat \jac = 0_{p \times p}$

    $\hat \jac_{\hat S, \hat S} =
    - (X_{\hat S}^\top X_{\hat S})^{-1}
    \diag ( n e^{\lambda_{\hat S}}
    \odot \sign \hbeta_{\hat S})$
    }
\Return{$\hbeta, \hat \jac$}
\label{alg:compute_jac_implicit_diff}
\end{algorithm}
}
An additional benefit of such updates is that they can be restricted to the (current) support, which leads to faster Jacobian computation.

We now show that the Jacobian computed using forward differentiation and our
method, \Cref{alg:compute_jac_imp_forward_iter_diff}, converges toward the true
Jacobian.
\begin{mdframed}[linewidth=0.5pt]
\begin{proposition}\label{prop:convergence_iterdiff}
    Assuming the Lasso solution (\Cref{pb:lasso}) (or weighted Lasso \Cref{pb:alasso}) is unique, then \Cref{alg:compute_jac_imp_forward_iter_diff,alg:compute_jac_forward_diff_bcd} converge toward
    the Jacobian
    $\hat \jac$ defined  in \Cref{prop:closed_form_jac_lasso}.
    \Cref{alg:compute_jac_forward_diff_bcd} computes the Jacobian along with the regression coefficients, once the support has been identified, the Jacobian converges linearly.
    \Cref{alg:compute_jac_imp_forward_iter_diff} computes first the coefficients $\hat \beta$ and then the Jacobian $\hat \jac$, provided that the support has been identified in the first step, the convergence is linear in the second, with the same rate as \Cref{alg:compute_jac_forward_diff_bcd}:
    \begin{equation}
        \normin{\jac_{\hat S}^{(k+1)} - \hat \jac}_{(X_{:, \hat S}^{\top} X_{:, \hat S})^{-1}}
        \leq
        C^k
        \normin{\jac_{\hat S}^{(k)} - \hat \jac}_{(X_{:, \hat S}^{\top} X_{:, \hat S})^{-1}}
        \nonumber
    \end{equation}
    where
    $C = \normin{A^{(j_{\hat s})} \dots A^{(j_1)}}_2 <1$, $j_1, \dots, j_{\hat s}$
    are the indices of the support of  $\hat \beta$ in increasing order
    and
     \begin{equation}
        A^{(j_s)} =
            \Id_{\hat s}
            - \frac{\left ( X_{:, \hat S}^\top X_{:, \hat S} \right )^{1/2}_{:, j_s} }{\norm{X_{:, j_s}}}
             \frac{\left ( X_{:, \hat S}^\top X_{:, \hat S} \right )^{1/2}_{j_s, :} }{\norm{X_{:, j_s}}} \in \bbR^{\hat s \times \hat s}
             .
        \nonumber
    \end{equation}
\qed
\end{proposition}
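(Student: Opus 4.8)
The plan is to combine two ingredients. First, block coordinate descent \emph{identifies} the support and the sign pattern of $\hbeta^{(\lambda)}$ after finitely many epochs, so that the soft-thresholding derivatives appearing in the forward recursion eventually stop depending on the current iterate. Second, once these derivatives are frozen, the forward-differentiation recursion restricted to the identified support becomes an \emph{affine} fixed-point iteration whose unique fixed point is exactly the Jacobian $\hat\jac$ of \Cref{prop:closed_form_jac_lasso}, and which is a contraction in the Mahalanobis norm $\norm{\cdot}_{(X_{:,\hat S}^\top X_{:,\hat S})^{-1}}$; the rate $C$ then falls out of a short computation with products of orthogonal projections.

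I would first invoke finite-time sign identification of proximal (block) coordinate descent for the Lasso, valid under the standing uniqueness assumption together with the usual non-degeneracy (strict complementarity) condition for the inactive coordinates: there exists $k_0$ such that for all $k \geq k_0$, $\beta^{(k)}$ has support $\hat S$ and $\sign\beta^{(k)} = \sign\hbeta$. For $j \notin \hat S$ the argument of $\ST$ then lies in its flat branch, so the recursion keeps $\jac_j^{(k)} = 0$, consistent with $\hat\jac_{\hat S^c} = 0$. For $j = j_s \in \hat S$ the argument of $\ST$ lies in an affine branch, so $\partial_1\ST = 1$ and $\partial_2\ST = -\sign\hbeta_{j_s}$; substituting this and $X\jac = X_{:,\hat S}\jac_{\hat S}$ into the forward recursion, the coordinate-$j_s$ update becomes
\[
  \jac_{j_s} \leftarrow \jac_{j_s} - \frac{X_{:,j_s}^\top X_{:,\hat S}\jac_{\hat S}}{\norm{X_{:,j_s}}^2} - \frac{n e^\lambda \sign\hbeta_{j_s}}{\norm{X_{:,j_s}}^2},
\]
whose only fixed point solves $X_{:,\hat S}^\top X_{:,\hat S}\jac_{\hat S} = -n e^\lambda \sign\hbeta_{\hat S}$, i.e. $\jac_{\hat S} = \hat\jac_{\hat S}$ of \Cref{prop:closed_form_jac_lasso} (the matrix $X_{:,\hat S}^\top X_{:,\hat S}$ being invertible by uniqueness). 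For the weighted Lasso the homogeneous part of this update is the \emph{same} linear operator (now applied columnwise) and the constant term carries the diagonal sign factor of \Cref{prop:closed_form_jac_lasso}, so the analysis below is unchanged.

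Next I would track the error $\Gamma^{(k)} \eqdef \jac_{\hat S}^{(k)} - \hat\jac_{\hat S}$, which obeys the homogeneous recursion obtained by dropping the constant term. Setting $H \eqdef X_{:,\hat S}^\top X_{:,\hat S}$ and changing variables to $u \eqdef H^{1/2}\Gamma$, updating coordinate $j_s$ amounts to $u \leftarrow A^{(j_s)} u$ with $A^{(j_s)} = \Id_{\hat s} - v_s v_s^\top$ and $v_s \eqdef (H^{1/2})_{:,j_s}/\norm{X_{:,j_s}}$; since $(H^{1/2})_{:,j_s}^\top (H^{1/2})_{:,j_s} = \norm{X_{:,j_s}}^2$, $v_s$ is a unit vector and $A^{(j_s)}$ is the orthogonal projection onto $v_s^\perp$. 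Composing one full epoch $j_1,\dots,j_{\hat s}$ yields $u^{(k+1)} = A^{(j_{\hat s})}\cdots A^{(j_1)} u^{(k)}$, and because $\norm{u}_2 = \norm{\Gamma}_{(X_{:,\hat S}^\top X_{:,\hat S})^{-1}}$ this is exactly the claimed inequality with $C = \normin{A^{(j_{\hat s})}\cdots A^{(j_1)}}_2$. To get $C < 1$: if $\normin{A^{(j_{\hat s})}\cdots A^{(j_1)} x}_2 = \norm{x}_2$ then, peeling off one projection at a time, $x \perp v_s$ for every $s$, hence $x \in \bigcap_s v_s^\perp$; the $v_s$ are nonzero rescalings of the columns of the invertible matrix $H^{1/2}$, so they span $\bbR^{\hat s}$ and this intersection is $\{0\}$ (and $C = 0$ when $\hat s = 1$). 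Compactness of the unit sphere then gives $C < 1$. This proves convergence of \Cref{alg:compute_jac_imp_forward_iter_diff} (after its first step the support is identified, so its Jacobian loop is this contraction and converges linearly to the unique fixed point $\hat\jac$ regardless of initialization) and of \Cref{alg:compute_jac_forward_diff_bcd} (the finitely many epochs before $k_0$ are compositions of continuous affine-in-$\jac$ maps, hence $\jac^{(k_0)}$ is finite, and the same contraction applies afterwards, with the same rate $C$).

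The step I expect to be the main obstacle is the first one: rigorously establishing finite-time support and sign identification for BCD on the (weighted) Lasso, and in particular controlling the inactive coordinates (where the non-degeneracy condition enters), since the whole ``frozen-derivative'' reduction hinges on it. Once identification is granted, the fixed-point reformulation and the bound $C < 1$ are short linear-algebra arguments.
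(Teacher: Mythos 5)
Your proposal is correct and follows essentially the same route as the paper's proof: finite-time sign identification freezes the soft-thresholding derivatives, the per-coordinate update restricted to $\hat S$ becomes an affine map whose homogeneous part, after conjugation by $(X_{:,\hat S}^\top X_{:,\hat S})^{1/2}$, is the rank-one orthogonal projection $A^{(j_s)} = \Id_{\hat s} - v_s v_s^\top$, and the product over one epoch has operator norm $C<1$ because the $v_s$ span $\bbR^{\hat s}$. The obstacle you flag (rigorous finite-time sign identification for BCD) is indeed the point the paper treats most lightly, citing identification results for proximal gradient methods and asserting the analogue for BCD.
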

\end{mdframed}
Proof of \Cref{prop:convergence_iterdiff} can be found in \Cref{app:sub_conv_jac_ista,app:sub_conv_jac_bcd}.

\begin{remark}
    \emph{Uniqueness.}
    As proved in \citet[Lem. 3 and 4]{Tibshirani13} the set of (pathological) lambdas where the Lasso solution is not unique is typically empty.
    Moreover if the Lasso solution is not unique, there could be a non-continuous solution path
    $\lambda \mapsto \hat \beta^{(\lambda)} $,
    leaving only non-gradient based methods available.
    Even if \Cref{prop:convergence_iterdiff} does not provide theoretical guarantees in such a pathological setting, one can still apply \Cref{alg:compute_jac_imp_forward_iter_diff,alg:compute_jac_forward_diff_bcd},
    see \Cref{app:non_unique} for experiments in this settings.
\end{remark}
\begin{remark}
    \emph{Rate for the backward differentiation.}
    The backward and forward differentiation compute the same quantity: $\nabla_\lambda \cL(\lambda)$, but the backward differentiation directly computes the product given in \Cref{eq:grad_crit} leading to the gradient of $\mathcal{L}(\lambda)$. \Cref{prop:convergence_iterdiff} provides rates for the convergence of the Jacobian $\jac$ which leads to rates for the gradient \ie for the backward algorithm as well.
\end{remark}
\def \figsize {1}
\def \figprop {2}
\def \colprop {0.5}
%
\begin{figure}[tb]
    \includegraphics[width=\figsize\linewidth]{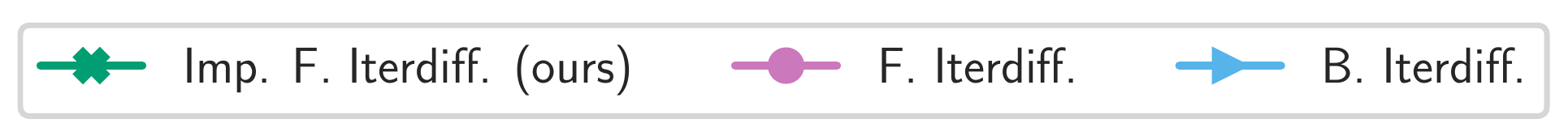}
    \centering
      \includegraphics[width=\figsize\linewidth]{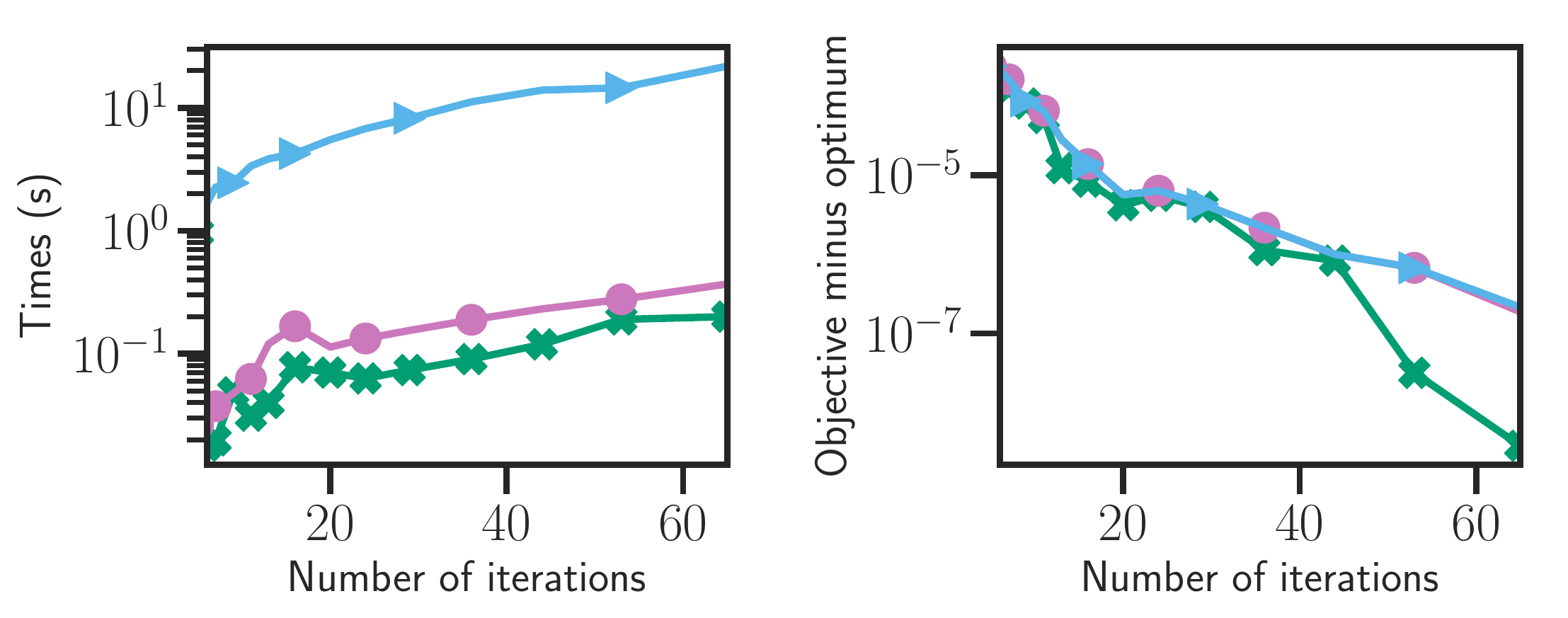}
  \caption{\textbf{Time to compute a single gradient} (Synthetic data, Lasso, $n, p = 1000, 2000$).
  Influence on the number of iterations of BCD (in the inner optimization problem of \Cref{eq:bilevel_opt}) on the computation time (left) and the distance to ``optimum'' of the gradient $ \nabla_\lambda \cL(\lambda) $(right) for the Lasso estimator.
  The ``optimum'' is here the gradient given by implicit differentiation (\Cref{alg:compute_jac_implicit_diff}).
  }
  \label{fig:intro_influ_niter}
\end{figure}
%
%
As an illustration, \Cref{fig:intro_influ_niter} shows the times of computation of a single gradient $\nabla_\lambda \cL(\lambda)$ and the distance to ``optimum'' of this gradient as a function of the number of iterations in the inner optimization problem for the forward iterative differentiation (\Cref{alg:compute_jac_forward_diff_bcd}), the backward iterative differentiation (\Cref{alg:compute_jac_backward_iter_diff}), and the proposed algorithm (\Cref{alg:compute_jac_imp_forward_iter_diff}).
The backward iterative differentiation is several order of magnitude slower than
the forward and our implicit forward method.
Moreover, once the support has been identified (after 20 iterations) the
proposed implicit forward method converges faster than other methods.
Note also that in \Cref{prop:closed_form_jac_lasso,prop:convergence_iterdiff} the Jacobian for the Lasso only depends on the \emph{support} (\ie the indices of the non-zero coefficients) of the regression coefficients $\hbeta^{(\lambda)}$.
In other words, once the support of $\hbeta^{(\lambda)}$ is correctly
identified, even if the value of the non-zeros coefficients are not correctly
estimated, the Jacobian is exact, see \citet{Sun_Jeong_Nutini_Schmidt2019} for
support identification guarantees.

{\fontsize{4}{4}\selectfont
\begin{algorithm}[t]
\SetKwInOut{Input}{input}
\SetKwInOut{Init}{init}
\SetKwInOut{Parameter}{param}
\caption{\textsc{Imp. F. Iterdiff.} (proposed)}
\Input{$
    X \in \bbR^{n \times p},
    y \in \bbR^{n},
    \lambda \in \bbR,
    n_{\text{iter}},
    n_{\text{iter\_jac}} \in \bbN $
}
\Init{
    $\jac = 0$}
\tcp{sequentially compute coef. \& Jacobian }

\If{Lasso}{
    Get $\hbeta = Lasso(X, y, \lambda, n_{\text{iter}})
    $ and its support $\hat S$. \\
    $dr = - X_{:, \hat S} \jac_{\hat S}$
    \tcp*[r]{trick for cheap updates}
    }
\If{wLasso}{
    Get $\hbeta = wLasso(X, y, \lambda, n_{\text{iter}})
    $ and its support $\hat S$. \\
    $dr = - X_{:, \hat S} \jac_{\hat S,\hat S}$
}
    \For{$k = 0,\dots, n_{\text{iter\_jac}} - 1$
    }{
        \For{$j \in \hat S$}{
            \If{Lasso}{
                $\jac_{old} = \jac_{j}$
                \tcp*{trick for cheap update}
                \tcp{diff. \Cref{eq:coordinate_descent_lasso} \wrt $\lambda$ }
                $\jac_{j} \pluseq
                 \frac{ X_{:, j}^\top dr}{\norm{X_{:,j}}^{2}}
                 - \frac{n e^{\lambda}}{\norm{X_{:,j}}^{2}} \sign \hbeta_j $
                 \tcp*[r]{$\bigo(n)$ }

                 $dr \minuseq X_{:, j} (\jac_{j, :} - \jac_{old})$
                 \tcp*[r]{$\bigo(n)$}
            }
            \If{wLasso}{
                $\jac_{old} = \jac_{j, :}$
                \tcp*{trick for cheap update}

                \tcp{diff.  \Cref{eq:coordinate_descent_lasso} \wrt $\lambda$ }
                $\jac_{j, \hat S} \pluseq
                 \frac{1}{\norm{X_{:,j}}^{2}} X_{:, j}^\top dr$
                \tcp*[r]{$\bigo(n \times \hat s)$}

                $\jac_{j, j} \minuseq \frac{n e^{\lambda_j}}{\norm{X_{:,j}}^{2}}  \sign \hbeta_j $
                \tcp*[r]{$\bigo(1)$}

                $dr \minuseq X_{:, j} \otimes(\jac_{j, :} - \jac_{old})$
                \tcp*[r]{$\bigo(n \times \hat s)$}
                }
        }
    }
\Return{
    $\hbeta, \jac$
    }
\label{alg:compute_jac_imp_forward_iter_diff}
\end{algorithm}
}

%
{\centering
\begin{table*}[t]
  \caption{Summary of cost in time and space for each method}
  \label{tab:summary_costs}
  \centering
  \begin{tabular}{lc|cc|cc}
    \toprule
     Mode
     & Computed
     & Space
     & Time
     & Space
     & Time \\
     & quantity
     & (Lasso)
     & (Lasso)
     & (wLasso)
     & (wLasso)\\
    \midrule
    \fiter
    & $\jac$
    & $\bigo(p)$
    & $\bigo(2npn_{\text{iter}})$
    & $\bigo(p^2)$
    &  $\bigo(np^2n_{\text{iter}})$
    \\
    B. Iterdiff.
    & $\jac^\top v$
    & $\bigo(2pn_{\text{iter}} )$
    & $\bigo(npn_{\text{iter}} + np^2n_{\text{iter}})$
    & $\bigo(p^2n_{\text{iter}})$
    & $\bigo(npn_{\text{iter}} + np^2n_{\text{iter}})$
    \\
    Implicit
    & $\jac^\top v$
    &  $\bigo(p)$
    & $\bigo(npn_{\text{iter}} + \hat s^3)$
    & $\bigo(p + \hat s^2)$
    & $\bigo(npn_{\text{iter}} + \hat s^3)$
    \\
    \our
    & $\jac$
    & $\bigo(p)$
    & $\bigo(npn_{\text{iter}} + n \hat sn_{\text{iter\_jac}})$
    & $\bigo(p + \hat s^2)$
    & $\bigo(npn_{\text{iter}} +  n \hat s^2 n_{\text{it\_jac}})$
    \\
    \bottomrule
  \end{tabular}
\end{table*}
}


\section{Experiments}
\label{sec:expes}

Our Python code is released as an open source package:
\url{https://github.com/QB3/sparse-ho}.
All the experiments are written in Python using Numba \citep{Lam_Pitrou_Seibert15} for the critical parts
such as the BCD loop.
We compare our gradient computation technique against other competitors (see the competitors section) on the HO problem (\Cref{eq:bilevel_opt}).

\textbf{Solving the inner optimization problem.}
Note that our proposed method, \ourfull, has the appealing property that it can be used with any solver.
For instance for the Lasso one can combine the proposed algorithm with state of the art solver such as \citet{Massias_Gramfort_Salmon18} which
 would be tedious to combine with iterative differentiation methods.
However for the comparison to be fair, for all methods we have used the same vanilla BCD algorithm (recalled in \cref{alg:bcd_lasso}).
We stop the Lasso-types solver when $\frac{f(\beta^{(k+1)})-f(\beta^{(k)})}{f(0)}< \epsilon^\text{tol}\enspace, $ where $f$ is the cost function of the Lasso or wLasso and $\epsilon^\text{tol}$ a given tolerance.
The tolerance is fixed at $\epsilon^\text{tol}=10^{-5}$ for all methods throughout the different benchmarks.

\textbf{Line search.}
For each hypergradient-based method, the gradient step is combined with a line-search strategy following the work of \citet{Pedregosa16}\footnote{see \url{https://github.com/fabianp/hoag} for details}.

\textbf{Initialization.}
Since the function to optimize $\cL$ is not convex, initialization plays a crucial role in the final solution as well as the convergence of the algorithm.
For instance, initializing $\lambda=\lambda_{\rm init}$ in a flat zone of $\cL(\lambda)$ could lead to slow convergence.
In the numerical experiments, the Lasso is initialized with
$\lambda_{\rm init} = \lambda_{\text{max}} - \log(10)$,
where $\lambda_{\text{max}}$ is the smallest $\lambda$ such that $0$ is a solution of \Cref{pb:lasso}.

\textbf{Competitors.}
In this section we compare the empirical performance of \ourfull
algorithm to different competitors.
Competitors are divided in two categories. Firstly, the ones relying on hyperparameter gradient:

\begin{itemize}
    \item \textbf{\our}: \ourfull (proposed) described in \Cref{alg:compute_jac_imp_forward_iter_diff}.
    \item \textbf{Implicit}:
    \implicitfull, which requires solving a $\hat s \times \hat s$ linear system as described in \Cref{alg:compute_jac_implicit_diff}.
    \item \textbf{F. Iterdiff.}:
    \forwardfull \citep{Deledalle_Vaiter_Fadili_Peyre14,Franceschi_Donini_Frasconi_Pontil17} which jointly computes the regression coefficients $\hat \beta$ as well as the Jacobian $\hat \jac$ as shown in \Cref{alg:compute_jac_forward_diff_bcd}.
\end{itemize}

Secondly, the ones not based on hyperparameter gradient:
\begin{itemize}
  \item \textbf{Grid-search}: as recommended by \citet{Friedman_Hastie_Tibshirani10}, we use $100$ values on a uniformly-spaced grid from $\lambda_{\text{max}}$ to $\lambda_{\text{max}} - 4 \log(10)$.
  \item \textbf{Random-search}: we sample uniformly at random $100$ values taken on the same interval as for the Grid-search $[\lambda_{\text{max}} - 4 \log(10) ; \lambda_{\text{max}}]$, as suggested by \citet{Bergstra13}.
  \item \textbf{Lattice Hyp.}:
  lattice hypercube sampling
  \citep{Bousquet_Gelly_Kurach_Teyaud_Vincent17},
  combines the idea of grid-search and random-search.
  We used the sampling scheme of
  \citet{Bouhlel_Hwang_Bartoli_Lafage_Morlier_Martins2019}
  and their code
  \footnote{https://github.com/SMTorg/smt}
  to sample the points to evaluate the function on.
  \item \textbf{Bayesian}: \sbmofull (SMBO) using a Gaussian process to model the objective function. We used the implementation of \citet{Bergstra13}.\footnote{https://github.com/hyperopt/hyperopt} The constraints space for the hyperparameter search was set in $[\lambda_{\text{max}} - 4 \log(10) ; \lambda_{\text{max}}]$, and the expected improvement (EI) was used as aquisition function.
\end{itemize}
The cost and the quantity computed by each algorithm can be found in \Cref{tab:summary_costs}.
The \backwardfull \citep{Domke12} is not included in the benchmark in \Cref{fig:Lasso_train_test_perf} since it was several orders of magnitude slower than the other techniques (see \Cref{fig:intro_influ_niter}). This is due to the high cost of the BCD algorithm in backward mode, see \Cref{tab:summary_costs}.

\subsection{Application to held-out loss}\label{sub:crossval}
%
When using the held-out loss, each dataset $(X, y)$ is split in 3 equal parts: the training set $(X^{\text{train}}, y^\text{train})$, the validation set $(X^{\text{val}}, y^\text{val})$ and the test set $(X^{\text{test}}, y^\text{test})$.
%

(\textit{Lasso, held-out criterion}).
For the Lasso and the held-out loss, the bilevel optimization \Cref{eq:bilevel_opt} reads:
\begin{align}\label[pb_multiline]{pb:bilevel_opt_cv}
  \argmin_{\lambda \in \bbR}
  &
  \normin{y^{\text{val}} - X^{\text{val}} \hat \beta^{(\lambda)} }^{2}
  \\
  \st \hat \beta^{(\lambda)} &\in
  \argmin_{\beta \in \bbR^p}
    {
      \tfrac{1}{2n} \normin{y^{\text{train}} - X^{\text{train}} \beta}_2^2+ e^\lambda \normin{\beta}_1
    }
   \enspace.\nonumber
\end{align}
\Cref{fig:Lasso_train_test_perf} (top) shows on 3 datasets (see \Cref{app:dataset} for dataset details) the distance to the ``optimum'' of $ \normin{y^{\text{val}} - X^{\text{val}} \hat \beta^{(\lambda)} }^{2}$ as a function of time. Here the goal is to find $\lambda$ solution of \Cref{pb:bilevel_opt_cv}.
The ``optimum'' is chosen as the minimum of $  \normin{y^{\text{val}} - X^{\text{val}} \hat \beta^{(\lambda)} }^{2}$ among all the methods.
\Cref{fig:Lasso_train_test_perf} (bottom) shows the loss $ \normin{y^{\text{test}} - X^{\text{test}} \hat \beta^{(\lambda)} }^{2}$ on the test set (independent from the training set and the validation set). This illustrates how well the estimator generalizes.
Firstly, it can be seen that on all datasets the proposed \ourfull outperforms \forwardfull which illustrates \Cref{prop:convergence_iterdiff} and corroborates the cost of each algorithm in \Cref{tab:summary_costs}.
Secondly, it can be seen that on the \emph{20news} dataset (\Cref{fig:Lasso_train_test_perf}, top) the \implicitfull (\Cref{alg:compute_jac_implicit_diff}) convergence is slower than \ourfull, \forwardfull,
and even slower than the grid-search.
In this case, this is due to the very slow convergence of the conjugate gradient algorithm~\citep{Nocedal_Wright06} when solving the ill-conditioned linear system in \Cref{alg:compute_jac_implicit_diff}.
%
\def \figsize {1}
\def \figprop {2}
\def \colprop {0.5}

\begin{figure*}[tb]
    \centering
    \begin{subfigure}[b]{0.9\textwidth}
        \centering
        \includegraphics[width=\figsize\linewidth]{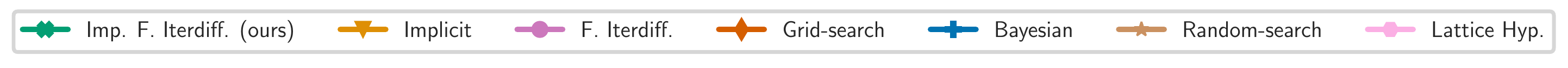}
        \includegraphics[width=\figsize\linewidth]{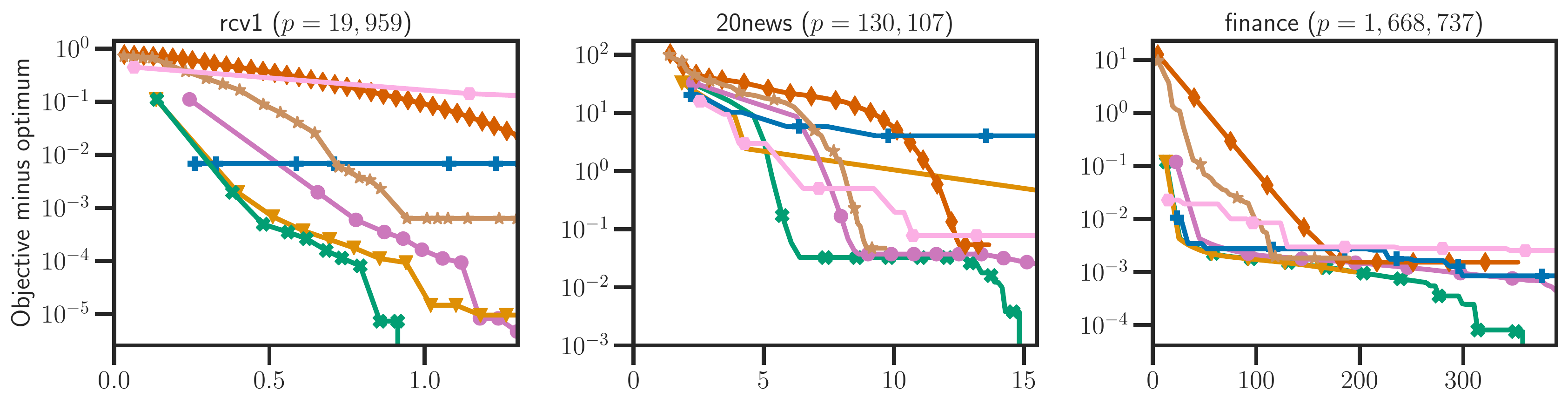}
    \end{subfigure}%

    \begin{subfigure}[b]{0.9\textwidth}
        \centering
        \includegraphics[width=\figsize\linewidth]{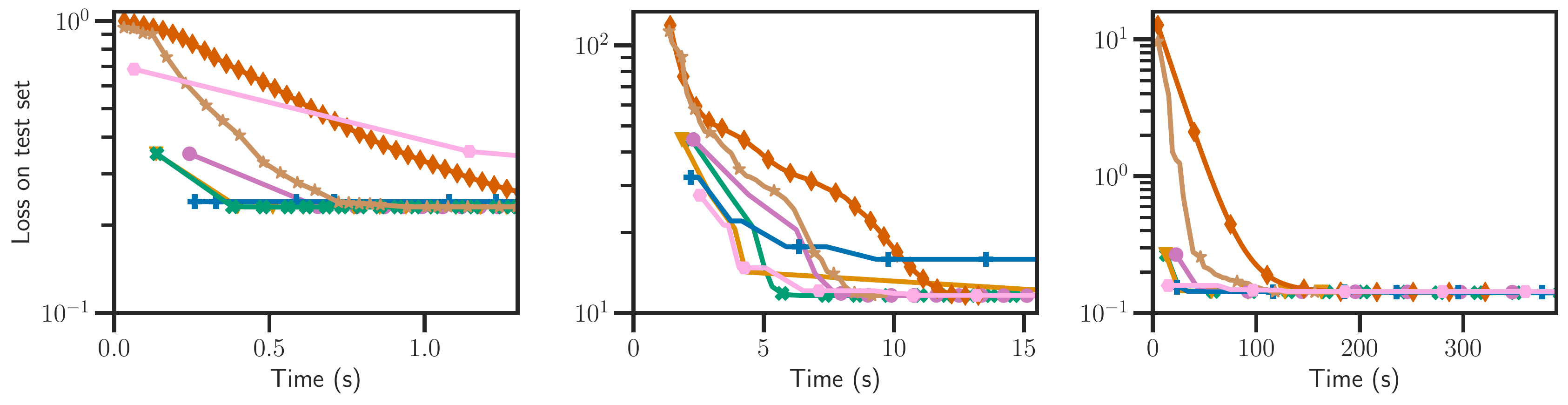}
    \end{subfigure}
    \caption{\textbf{Computation time for the HO of the Lasso on real data.} Distance to ``optimum'' (top) and performance (bottom) on the test set for the Lasso for 3 different datasets: \emph{rcv1}, \emph{20news} and \emph{finance}.}
    \label{fig:Lasso_train_test_perf}
\end{figure*}
(\textit{MCP, held-out criterion}).
We also applied our algorithm on an estimator based on a non-convex penalty: the MCP \citep{Zhang10} with 2 hyperparameters.
Since the penalty is non-convex the estimator may not be continuous \wrt hyperparameters and the theory developed above does not hold.
However experimentally \ourfull outperforms \forwardfull for the HO, see \Cref{app:sec_mcp} for full details.
%


\subsection{Application to another criterion: SURE}\label{sub:sugar}
%
Evaluating models on held-out data makes sense if the design is formed
from random samples as it is often considered in supervised learning.
However, this assumption does not hold for certain kinds of applications in signal or image processing. For these applications, the held-out loss cannot be used as the criterion for optimizing the hyperparameters of a given model.
In this case, one may use a proxy of the prediction risk, like the Stein Unbiased Risk Estimation (SURE, \citet{Stein81}).
The SURE is an unbiased estimator of the prediction risk under weak differentiable
conditions. The drawback of this criterion is that it requires the knowledge of the variance of the noise.
The SURE is defined as follows:
    $\text{SURE}(\lambda)
    = \normin{y - X\hat{\beta}^{(\lambda)} }^{2} \!\!- n \sigma^{2} + 2\sigma^{2}{\text{dof}(\hat \beta^{(\lambda)})} \enspace,$
where the degrees of freedom (dof \citealt{Efron86}) is defined as
$
  {\text{dof}(\hat \beta^{(\lambda)})} =
  \sum_{i=1}^n \cov(y_i, (X \hat \beta^{(\lambda)})_i) / \sigma^2 \enspace.$
The dof can be seen a measure of the complexity of the model, for instance for the Lasso $\text{dof}(\hat \beta^{(\lambda)}) = \hat s$, see \citet{Zou_Hastie_Tibshirani07}.
The SURE can thus be seen as a criterion trading data-fidelity against model complexity.
However, the dof is not differentiable (not even continuous in the Lasso case), yet it is possible to construct a weakly differentiable approximation of it based on Finite Differences Monte-Carlo (see \citealt{Deledalle_Vaiter_Fadili_Peyre14} for full details), with $\epsilon > 0$ and $\delta \sim \mathcal{N}(0, \Id_{n})$:
\begin{equation}
    {\text{dof}_{\text{FDMC}}}(y, \lambda, \delta, \epsilon)=\tfrac{1}{\epsilon}\langle X \hat \beta^{(\lambda)} (y+ \epsilon\delta)- X\hat \beta^{(\lambda)}(y), \delta \rangle \enspace. \nonumber
\end{equation}
We use this smooth approximation in the bi-level optimization problem to find the best hyperparameter.
The bi-level optimization problem then reads:
\begin{align}\label[pb_multiline]{pb:bilevel_opt_sure}
  &\argmin_{\lambda \in \bbR}
  \normin{y - X \hat \beta^{(\lambda)} }^{2}
  + 2\sigma^{2}{\text{dof}}_{\text{FDMC}}(y, \lambda, \delta, \epsilon)
  \\
  &\st \hat \beta^{(\lambda)}(y) \in
  \argmin_{\beta \in \bbR^p}
    {
      \tfrac{1}{2n} \normin{y- X \beta}_2^2+ e^\lambda \normin{\beta}_1
    } \nonumber
  \\
  &\phantom{\st} \hat \beta^{(\lambda)}(y + \epsilon \delta) \in
  \argmin_{\beta \in \bbR^p}
    {
      \tfrac{1}{2n} \normin{y + \epsilon \delta - X \beta}_2^2+ e^\lambda \normin{\beta}_1
    } \nonumber
\end{align}
Note that solving this problem requires the computation of two (instead of one for the held-out loss) Jacobians \wrt $\lambda$ of the solution $\hat \beta^{(\lambda)}$ at the points $y$ and $y + \epsilon \delta$.

(\textit{Lasso, SURE criterion}).
To investigate the estimation performance of the \ourfull in comparison to the competitors described above, we used as metric the (normalized) Mean Squared Error (MSE) defined as $\text{MSE} \eqdef {\normin{\hat{\beta} - \beta^{*}}^{2}}/{\norm{\beta^{*}}^{2}}$.
The entries of the design matrix $X \in \bbR^{n \times p}$ are \iid random Gaussian variables $\mathcal{N}(0,1)$.
The number of rows is fixed to $n=100$.
Then, we generated $\beta^{*}$ with 5 non-zero coefficients equals to $1$.
The vector $y$ was computed by adding to $X\beta^{*}$ additive Gaussian noise controlled by the Signal-to-Noise Ratio:
$  \text{SNR} \eqdef {\norm{X\beta^*}}/{\norm{y - X\beta^*}}$ (here $\text{SNR}=3$).
Following \citet{Deledalle_Vaiter_Fadili_Peyre14}, we set $\epsilon = 2\sigma / n^{0.3}$.
We varied the number of features $p$ between 200 and 10,000 on a linear grid of size 10.
For a fixed number of features, we performed 50 repetitions and each point of the curves represents the mean of these repetitions.
Comparing efficiency in time between methods is difficult since they are not directly comparable.
Indeed, grid-search and random-search discretize the HO space whereas others methods work in the continuous space which is already an advantage.
However, to be able to compare the hypergradient methods and possibly compare them to the others, we computed the total amount of time for a method to return its optimal value of $\lambda$.
In order to have a \emph{fair} comparison, we compared $50$ evaluations of the line-search for each hypergradient methods, 50 evaluations of the Bayesian methods and finally 50 evaluations on fixed or random grid.
We are aware that the cost of each of these evaluations is not the same but it allows to see that our method stays competitive in time with optimizing one parameter.
Moreover we will also see that our method scales better with a large number of hyperparameters to optimize.

\Cref{fig:lasso_estimation} shows the influence of the number of features on the relative MSE (ie. MSE of a method minus the MSE of our implicit forward method) and the computation time.
First, MSE of all gradient based methods is lower than the other methods which means that $\hat\beta^{(\lambda)}$ leads to a better estimation when $\lambda$ is chosen via the gradient based methods.
This illustrates that continuous optimization for hyperparameter selection leads to better estimation performance than discrete or Bayesian optimization.
Yet, the running time of our proposed method is the lowest of all hypergradient-based strategies and competes with the grid-search and the random-search.
\def \figsize {1}
\def \figprop {2}
\def \colprop {0.5}
\begin{figure}[t]
        \includegraphics[width=\figsize\linewidth]{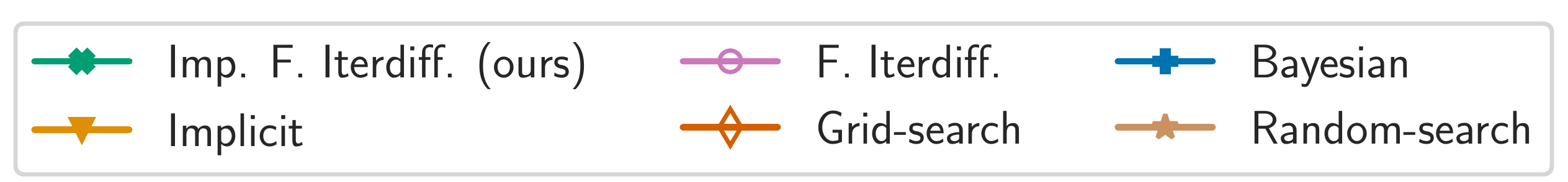}
        \centering
         \includegraphics[width=\figsize\linewidth]{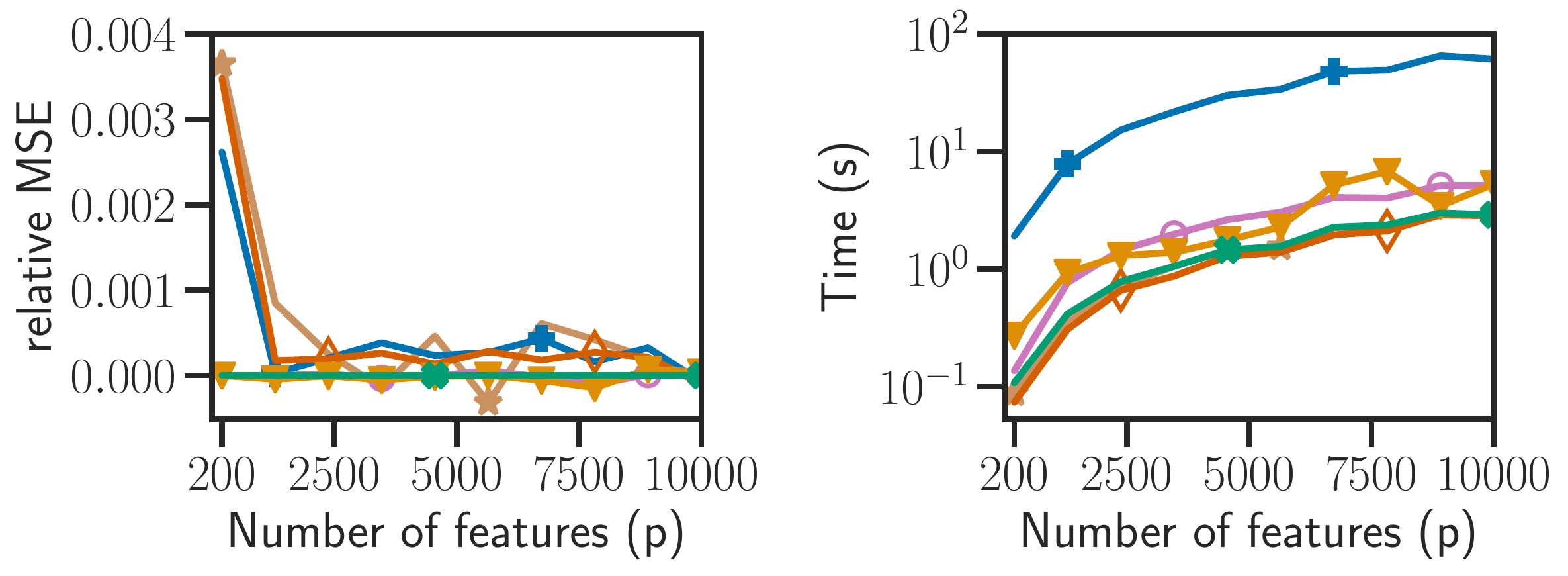}
\caption{\textbf{Lasso: estimation performance.}
Estimation relative Mean Squared Error (left) and running time (right) as a function of the number of features for the Lasso model.}
\label{fig:lasso_estimation}
\end{figure}

(\textit{Weighted Lasso vs Lasso, SURE criterion}).
As our method leverages the sparsity of the solution, it can be used for HO with a large number of hyperparameters, contrary to classical \forwardfull.
The weighted Lasso (wLasso, \citealt{Zou06}) has $p$ hyperparameters and was introduced to reduce the bias of the Lasso.
However setting the $p$ hyperparameters is impossible with grid-search.

\Cref{fig:Lasso_vs_wLasso} shows the estimation MSE and the running time of the different methods to obtain the hyperparameter values as a function of the number of features used to simulate the data.
The simulation setting is here the same as for the Lasso problems investigated in \Cref{fig:lasso_estimation} ($n=100$, $\text{SNR}=3$).
We compared the classical Lasso estimator and the weighted Lasso estimator where the regularization hyperparameter was chosen using \ourfull and the forward iterative differentiation as described in \Cref{alg:compute_jac_forward_diff_bcd}.
\Cref{eq:bilevel_opt} is not convex for the weighted Lasso and a descent algorithm like ours can be trapped in local minima, crucially depending on the starting point $\lambda_{\rm init}$.
To alleviate this problem, we introduced a regularized version of \Cref{eq:bilevel_opt}:
\begin{align}
    \label{eq:l2_bilevel_opt}
    \argmin_{\lambda \in \bbR}
    & \quad
    \mathcal{C} \left (\hbeta^{(\lambda)}  \right)
    +
    \gamma \sum_j^p \lambda_j^2
    \nonumber \\
    &\st \hbeta^{(\lambda)} \in \argmin_{\beta \in \bbR^p}
      {\eqdef\psi(\beta,\lambda)}
     \enspace.
\end{align}
The solution obtained by solving \Cref{eq:l2_bilevel_opt} is then used as the initialization $\lambda^{(0)}$ for our algorithm. In this experiment the regularization term is constant $\gamma = C(\beta^{(\lambda_{\text{max}})})/10$. We see in \Cref{fig:Lasso_vs_wLasso} that the weighted Lasso gives a lower MSE than the Lasso and allows for a better recovery of $\beta^{*}$.
This experiment shows that the amount of time needed to obtain the vector of hyperparameters of the weighted Lasso via our algorithm is in the same range as for obtaining the unique hyperparameter of the Lasso problem.
It also shows that our proposed method is much faster than the \textit{naive} way of computing the Jacobian using forward or backward iterative differentiation. The \implicitfull method stays competitive for the wLasso due to the small support of the solution and hence a small matrix to inverse. 
A maximum running time threshold was used for this experiment checking the running time at each line-search iteration, explaining why the \forwardfull and \backwardfull of the wLasso does not explode in time on \Cref{fig:Lasso_vs_wLasso}.

\def \figsize {1}
\def \figprop {2}
\def \colprop {0.5}

\begin{figure}[t]
        \includegraphics[width=\figsize\linewidth]{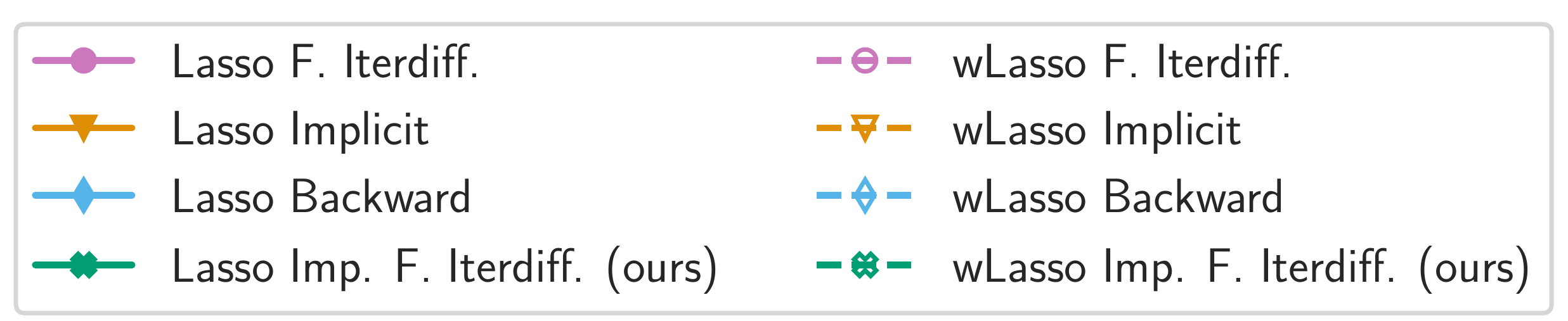}
        \centering
         \includegraphics[width=\figsize\linewidth]{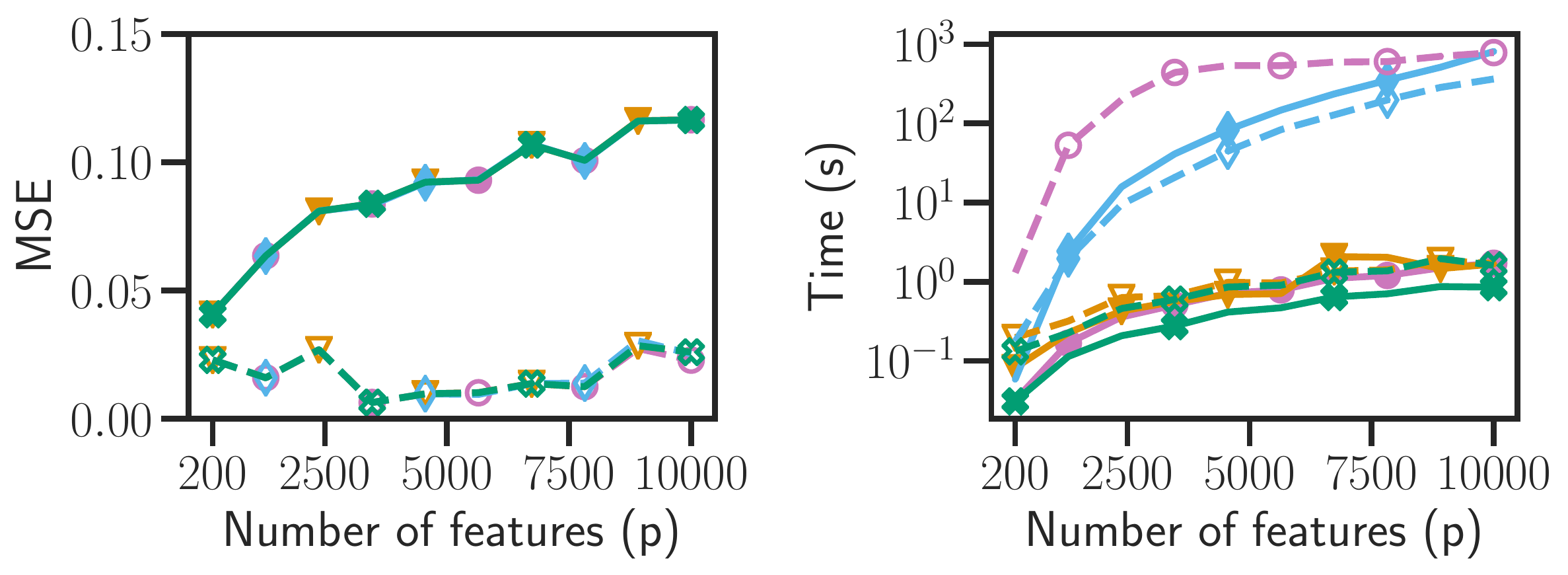}
\caption{\textbf{Lasso vs wLasso.}
\label{fig:Lasso_vs_wLasso}
Estimation Mean Squared Error (left) and running (right) of competitors as a function of the number of features for the weighted Lasso and Lasso models.}
\end{figure}


\section*{Conclusion}
In this work we studied the performance of several methods to select hyperparameters of Lasso-type estimators showing results for the Lasso and the weighted Lasso, which have respectively one or $p$ hyperparameters.
We exploited the sparsity of the solutions and the specific structure of the iterates of forward differentiation, leading to our \ourfull algorithm that computes efficiently the full Jacobian of these estimators \wrt the hyperparameters. This allowed us to select them through a standard gradient descent and have an approach that scales to a high number of hyperparameters. Importantly, contrary to a classical implicit differentiation approach, the proposed algorithm does not require solving a linear system.
Finally, thanks to its two steps nature, it is possible to leverage in the first step the availability of state-of-the-art Lasso solvers that make use of techniques such as active sets or screening rules. Such algorithms, that involve calls to inner solvers run on subsets of features, are discontinuous \wrt hyperparameters which would significantly challenge a single step approach based on automatic differentiation.

\paragraph{Acknowledgments}
This work was funded by ERC Starting Grant SLAB ERC-StG-676943 and ANR GraVa ANR-18-CE40-0005.

\clearpage
\bibliographystyle{plainnat}
\bibliography{references_all}
\clearpage
\appendix
\onecolumn

\section{Proofs}
\label{sec:proofs}

\subsection{Proof of \Cref{prop:closed_form_jac_lasso}}
\label{sub:proof_of_prop:closed_form_jac_lasso}

We start by a lemma on the weak derivative of the soft-thresholding.
\begin{lemma}\label{app:lemma_st_deriv}
    The soft-thresholding $\ST: \bbR \times \bbR^{+} \mapsto \bbR$ defined by $\ST(t,\tau) = \sign(t) \cdot(|t|-\tau)_{+}$ is weakly differentiable with weak derivatives
    \begin{align}\label{eq:st_weak_t}
        \partial_1 \ST
        (t, \tau) =
        \ind_{\{\abs{t} > \tau\}} \enspace ,
    \end{align}
    and
    \begin{align}\label{eq:st_weak_tau}
        \partial_2 \ST(t, \tau)
        =
        - \sign(t) \cdot \ind_{\{\abs{t} > \tau\}} \enspace,
    \end{align}
   where
   \begin{align}
   \ind_{\{\abs{t} > \tau\}} =
   \begin{cases}
       1, & \text{ if} \abs{t} > \tau, \\
       0, & \text{ otherwise}.
       \end{cases}
   \end{align}
\end{lemma}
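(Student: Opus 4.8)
The plan is to reduce the statement to an explicit piecewise description of $\ST$ together with a standard regularity fact. First I would record the closed form of $\ST$ on the three natural regions of $\bbR\times\bbR^+$:
\begin{equation*}
    \ST(t,\tau)=
    \begin{cases}
        t-\tau, & t>\tau,\\
        0, & \abs{t}\le\tau,\\
        t+\tau, & t<-\tau.
    \end{cases}
\end{equation*}
On each of the two open sets $U_+=\{(t,\tau):t>\tau\ge 0\}$ and $U_-=\{(t,\tau):t<-\tau\le 0\}$ the map is affine, hence $C^\infty$, with classical partial derivatives $\partial_1\ST\equiv 1$ and $\partial_2\ST\equiv-\sign(t)$; on the open set $\{\abs{t}<\tau\}$ it is identically $0$, so both partials vanish there. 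Thus off the set $N=\{(t,\tau):\abs{t}=\tau\}$ the pointwise partials already equal the claimed expressions $\partial_1\ST(t,\tau)=\ind_{\{\abs{t}>\tau\}}$ and $\partial_2\ST(t,\tau)=-\sign(t)\cdot\ind_{\{\abs{t}>\tau\}}$.

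Next I would upgrade these pointwise-a.e. derivatives to genuine weak derivatives. The key observation is that $\ST$ is $1$-Lipschitz in $t$ (it is the soft-thresholding operator, i.e. the proximal operator of $\tau\abs{\cdot}$, hence nonexpansive) and $1$-Lipschitz in $\tau$, so it is locally Lipschitz on $\bbR\times\bbR^+$. By Rademacher's theorem it is therefore differentiable at Lebesgue-a.e.\ point, and for a locally Lipschitz function the distributional gradient coincides a.e.\ with the pointwise gradient. Since $N$ is a Lebesgue-null set (a union of two half-lines), the behaviour of the derivatives on $N$ is immaterial, and the formulas above are valid as identities in $L^\infty_{\mathrm{loc}}$, that is, as weak derivatives.

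If one prefers to avoid invoking Rademacher, the same conclusion follows from a direct verification of the distributional identity $\int \ST\,\partial_i\varphi = -\int g_i\,\varphi$ for every test function $\varphi\in C_c^\infty(\bbR\times\bbR^+)$, where $g_1,g_2$ are the claimed right-hand sides: split the integral over the three open regions, integrate by parts on each, and observe that the boundary integrals along $N$ cancel pairwise because $\ST$ is continuous across $N$ (the one-sided limits of $\ST$ agree there). The only point requiring a little care is the bookkeeping of these boundary terms, together with the degenerate slice $\tau=0$ where $\ST(t,0)=t$; continuity of $\ST$ makes both go through, so I do not expect a genuine obstacle here — the lemma is essentially a careful but routine computation.
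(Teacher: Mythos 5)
Your argument is correct. Note that the paper does not actually prove this lemma: it simply cites \citet[Proposition~1]{Deledalle_Vaiter_Fadili_Peyre14}, so there is no in-paper derivation to compare against. What you supply is the standard self-contained proof of that cited fact: the piecewise-affine description of $\ST$ off the Lebesgue-null set $\{\abs{t}=\tau\}$ gives the claimed formulas as classical partial derivatives there, and the joint local Lipschitzness of $(t,\tau)\mapsto\ST(t,\tau)$ lets you identify the distributional gradient with the a.e.\ pointwise gradient (equivalently, your direct integration-by-parts verification works because $\ST$ is continuous across $\{\abs{t}=\tau\}$, so the boundary terms cancel, and the slice $\tau=0$ lies on the boundary of the domain and is not seen by test functions). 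This is a complete and appropriate justification for the lemma.
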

\begin{proof}
    See~\citep[Proposition 1]{Deledalle_Vaiter_Fadili_Peyre14}
\end{proof}
\begin{proof} (\Cref{prop:closed_form_jac_lasso}, Lasso ISTA)
    The soft-thresholding is differentiable almost everywhere (a.e.), thus \Cref{eq:fixed_point_lasso} can be differentiated a.e. thanks to the previous lemma, and for any $\alpha > 0$
    \begin{align}
        \hat \jac_ =
            \begin{pmatrix}
                \ind_{\left\{\abs{\hbeta_1} > 0\right\}} \\
                \vdots\\
                \ind_{\left\{\abs{\hbeta_p} > 0\right\}}
            \end{pmatrix}
            \nonumber \odot \left (
                \Id_p - \frac{1}{\alpha} X^\top X \right ) \hat \jac_
        - \frac{n e^\lambda}{\alpha}
        \begin{pmatrix}
            \sign(\hbeta_1)\ind_{\left\{\abs{\hbeta_1} > 0 \right\}} \\
            \vdots\\
            \sign(\hbeta_p)\ind_{\left\{\abs{\hbeta_p} > 0\right\}}
        \end{pmatrix}
        \enspace .
    \end{align}
    Inspecting coordinates inside and outside the support of $\hbeta$ leads to:
    \begin{align}
         \left\{\begin{array}{lll}
        \hat \jac_{\hat S^c} & = & 0\\
        \hat \jac_{\hat S}   & = &
        \hat \jac_{\hat S} - \frac{1}{\alpha} X_{:, \hat S}^\top X_{:, \hat S} \hat \jac_{\hat S}
        - \frac{n e^\lambda}{\alpha} \sign{\hat \beta_{\hat S}}
        \enspace . \label{eq:diff_fix_point}
        \end{array}
        \right.
    \end{align}
    Rearranging the term of \Cref{eq:diff_fix_point} it yields:
    \begin{align}
        X_{:, \hat S}^\top X_{:, \hat S} \hat \jac_{\hat S}
        &= - n e^\lambda \sign{\hat \beta_{\hat S}} \\
        \hat \jac_{\hat S}
        &= - n e^\lambda
        \left ( X_{:, \hat S}^\top X_{:, \hat S} \right )^{-1} \sign{\hat \beta_{\hat S}} \enspace.
    \end{align}

    (\Cref{prop:closed_form_jac_lasso}, Lasso BCD)

    The fixed point equations for the BCD case is
    \begin{equation}
        \label{eq:BCD_fixed}
     \hat\beta_{j} = \ST \left (\hat\beta_{j} - \frac{1}{\norm{X_{:j}}^{2}_{2}}X^\top_{:j}(X\hat\beta_{j}-y), \frac{ne^{\lambda}}{\norm{X_{:j}}_{2}^{2}} \right )\enspace.
    \end{equation}

    As before we can differentiate this fixed point equation \Cref{eq:BCD_fixed}
    \begin{equation}
        \hat \jac_{j} =
        \ind_{\left\{\abs{\hbeta_j} > \tau \right \}}
        \cdot
        \left(\hat\jac_{j} - \frac{1}{\norm{X_{:j}}^{2}_{2}}X^{\top}_{:j}X\hat\jac\right)
        -
        \frac{ne^{\lambda}}{\norm{X_{:j}}_{2}^{2}} \sign{(\hat\beta_{j})}\ind_{\left\{\abs{\hbeta_j} > \tau \right \}}\enspace,
    \end{equation}
    leading to the same result.
\end{proof}

%
\subsection{Proof of \Cref{prop:convergence_iterdiff} in the ISTA case}
\label{app:sub_conv_jac_ista}
%
\begin{proof} (Lasso case, ISTA)
    In \Cref{alg:compute_jac_forward_diff_bcd}, $\beta^{(k)}$ follows ISTA steps, thus $(\beta^{(k)})_{l \in \bbN}$ converges toward the solution of the Lasso $\hbeta$.
    Let $\hat S$ be the support of the Lasso estimator $\hbeta$, and $\nu^{(\hat S)} > 0$ the smallest eigenvalue of $X_{:, \hat S}^\top X_{:, \hat S}$.
    Under uniqueness assumption proximal gradient descent (\aka ISTA) achieves sign identification \citep{Hale_Yin_Zhang08}, \ie there exists $k_0 \in \bbN$ such that for all $k \geq k_0 - 1$:

    \begin{align}
        \sign \beta^{(k+1)}
        &= \sign  \hbeta
        \enspace .
    \end{align}
    Recalling the update of the Jacobian $\jac$ for the Lasso solved with ISTA is the following:
    \begin{align}
        \jac^{(k+1)} = &
        \left | \sign  \beta^{(k+1)} \right |
        \odot \left ( \Id - \frac{1}{\norm{X}_2^2} X^\top X \right ) \jac^{(k)} \nonumber - \frac{n e^\lambda}{\norm{X}_2^2} \sign  \beta^{(k+1)}
        \enspace,
    \end{align}
    it is clear that $\jac^{(k)}$ is sparse with the sparsity pattern $\beta^{(k)}$ for all $k \geq k_0$.
    Thus we have that for all $k \geq k_0$:
    \begin{align}
        \jac_{\hat S}^{(k+1)}
        &= \jac_{\hat S}^{(k)} - \frac{1}{\norm{X}_2^2} X_{:, \hat S}^\top X \jac^{(k)} - \frac{n e^\lambda}{\norm{X}_2^2} \sign \hbeta_{\hat S} \nonumber \\
        &= \jac_{\hat S}^{(k)} - \frac{1}{\norm{X}_2^2} X_{:, \hat S}^\top X_{:, \hat S} \jac_{\hat S}^{(k)} - \frac{n e^\lambda}{\norm{X}_2^2} \sign \hbeta_{\hat S} \nonumber \\
        &= \left ( \Id_{\hat S} - \frac{1}{\norm{X}_2^2} X_{:, \hat S}^\top X_{:, \hat S} \right ) \jac_{\hat S}^{(k)} - \frac{n e^\lambda}{\norm{X}_2^2} \sign \hbeta_{\hat S} . \label{eq:rec_jac_lasso}
    \end{align}
    One can remark that $\hat \jac$ defined in \Cref{eq:closed_form_lasso}, satisfies the following:
    \begin{align}
        \hat \jac_{\hat S}
        = \left ( \Id_{\hat S}
                - \frac{1}{\norm{X}_2^2} X_{:, \hat S}^\top X_{:, \hat S}
            \right ) \hat \jac_{\hat S}
        - \frac{n e^\lambda}{\norm{X}_2^2} \sign \hbeta_{\hat S} \label{eq:jac_hat_lasso} \enspace .
    \end{align}
    Combining \Cref{eq:rec_jac_lasso,eq:jac_hat_lasso} and denoting $\nu^{({\hat S})} > 0$ the smallest eigenvalue of $X_{\hat S}^\top X_{\hat S}$, we have for all $k \geq k_0 $:
    \begin{align}
        \jac_{\hat S}^{(k+1)} - \hat \jac_{\hat S}
        &= \left (
                \Id_{\hat S} - \frac{1}{\norm{X}_2^2} X_{:, \hat S}^\top X_{:, \hat S}
            \right ) \left ( \jac_{\hat S}^{(k)} - \hat \jac_{\hat S} \right ) \nonumber \\
        \normin{\jac_{\hat S}^{(k+1)} - \hat \jac_{\hat S}}_2
        & \leq \left (1 - \frac{\nu^{({\hat S})}}{\norm{X}_2^2} \right ) \normin{\jac_{\hat S}^{(k)} - \hat \jac_{\hat S}}_2 \nonumber \\
        \normin{\jac_{\hat S}^{(k)} - \hat \jac_{\hat S}}_2
        & \leq \left (1 - \frac{\nu^{({\hat S})}}{\norm{X}_2^2} \right )^{k - k_0} \normin{\jac_{\hat S}^{(k_0)} - \hat \jac_{\hat S}}_2 \nonumber \enspace .
    \end{align}
    Thus the sequence of Jacobian $\left (\jac^{(k)} \right )_{k \in \bbN}$ converges linearly to $\hat \jac$ once the support is identified.
\end{proof}

\begin{proof}
    (wLasso case, ISTA) Recalling the update of the Jacobian $\jac\in \bbR^{p\times p}$ for the wLasso solved with ISTA is the following:
    \begin{align}
        \jac^{(k+1)} = &
        \left | \sign  \beta^{(k+1)} \right |
        \odot \left ( \Id - \frac{1}{\norm{X}_2^2} X^\top X \right ) \jac^{(k)} \nonumber \\
        & - \frac{n e^\lambda}{\norm{X}_2^2} \diag \left ( \sign\beta^{(k+1)} \right ) \enspace,
    \end{align}

    The proof follows exactly the same steps as the ISTA Lasso case to show convergence in spectral norm of the sequence $(\jac^{(k)})_{k\in\bbN}$ toward $\hat\jac$.

\end{proof}

\subsection{Proof of \Cref{prop:convergence_iterdiff} in the BCD case}
\label{app:sub_conv_jac_bcd}
%
%
The goal of the proof is to show that iterations of the Jacobian sequence $(\jac^{(k)})_{k \in \bbN}$ generated by the Block Coordinate Descent algorithm (\Cref{alg:compute_jac_forward_diff_bcd}) converges toward the true Jacobian $\hat \jac$.
The main difficulty of the proof is to show that the Jacobian sequence follows a Vector AutoRegressive (VAR, see \citet[Thm. 10]{Massias_Vaiter_Gramfort_Salmon19} for more detail), \ie the main difficulty is to show that there exists $k_0$ such that for all $k \geq k_0$:
\begin{equation}
    \jac^{(k+1)} = A \jac^{(k)} + B \enspace,
\end{equation}
with $A\in \bbR^{p \times p}$ a contracting operator and $B \in \bbR^p$.
We follow exactly the proof of \citet[Thm. 10]{Massias_Vaiter_Gramfort_Salmon19}.

\begin{proof} (Lasso, BCD, forward differentiation (\Cref{alg:compute_jac_forward_diff_bcd}))

Let $j_1, \dots, j_S$ be the indices of the support of $\hat \beta$, in increasing order.
As the sign is identified, coefficients outside the support are 0 and remain 0.
We decompose
the $k$-th epoch of coordinate descent into individual coordinate updates:
Let $\tilde \beta^{(0)} \in \bbR^p$ denote the initialization (i.e., the beginning of the epoch,
    ),
$\tilde \beta^{(1)} = \beta^{(k)}$ the iterate after coordinate $j_1$ has been updated, etc., up to $\tilde \beta^{(S)} $ after coordinate
$j_S$ has been updated, i.e., at the end of the epoch ($\tilde \beta^{(S)} = \beta^{(k+1)}$).
Let $s \in S$, then  $\tilde \beta^{(s)}$ and $\tilde \beta^{(s-1)}$ are equal everywhere, except at coordinate $j_s$:
\begin{align}
    \tilde \jac_{j_s}^{(s)}
    &=
    \tilde \jac_{j_s}^{(s-1)}
    - \frac{1}{\norm{X_{:, j_s}}^2} X_{:, j_s}^\top X \tilde \jac^{(s-1)}
    - \frac{1}{\norm{X_{j_s}}^2} \sign \beta_{j_s}
    \quad \text{after sign identification we have:}
    \nonumber
    \\
    &=
    \tilde \jac_{j_s}^{(s-1)}
    - \frac{1}{\norm{X_{:, j_s}}^2} X_{:, j_s}^\top X_{:, \hat S} \tilde \jac_{\hat S}^{(s-1)}
    - \frac{1}{\norm{X_{:, j_s}}^2} \sign \hat \beta_{j_s}
    \nonumber
    \\
    \tilde \jac_{\hat S}^{(s)}
    &=
    \underbrace{\left (
        \Id_{\hat s}
        - \frac{1}{\norm{X_{:, j_s}}^2} e_{j_s} e_{j_s}^\top
        X_{:, \hat S}^\top X_{:, \hat S}
    \right )}_{A_s}
    \tilde \jac_{\hat S}^{(s-1)}
    - \frac{1}{\norm{X_{:, j_s}}^2} \sign \hat \beta_{j_s}
    \nonumber
    \\
    \left ( X_{:, \hat S}^\top X_{:, \hat S} \right )^{1/2}
    \tilde \jac_{\hat S}^{(s)}
    &=
    \underbrace{\left (
        \Id_{\hat s}
        - \frac{\left ( X_{:, \hat S}^\top X_{:, \hat S} \right )^{1/2}}{\norm{X_{:, j_s}}}
         e_{j_s} e_{j_s}^\top
         \frac{\left ( X_{:, \hat S}^\top X_{:, \hat S} \right )^{1/2}}{\norm{X_{:, j_s}}}
    \right )}_{A^{(s)}}
    \left ( X_{:, \hat S}^\top X_{:, \hat S} \right )^{1/2} \tilde \jac_{\hat S}^{(s-1)}
    - \underbrace{\frac{\left ( X_{:, \hat S}^\top X_{:, \hat S} \right )^{1/2}}{\norm{X_{:, j_s}}^2} \sign \hat \beta_{j_s}}_{b^{(s)}}
    \nonumber
\end{align}
We thus have:
\begin{equation}
    \left ( X_{:, \hat S}^\top X_{:, \hat S} \right )^{1/2} \tilde \jac_{\hat S}^{(\hat s)}
    =
    \underbrace{A^{(\hat s)} \dots A^{(1)} }_{A \in \bbR^{ \hat s \times \hat s}}
    \left ( X_{:, \hat S}^\top X_{:, \hat S} \right )^{1/2}  \jac_{\hat S}^{(1)}
    + \underbrace{A_S \dots A_2 b_1
    + \dots
    + A_S b_{S-1}
    + b_S}_{b \in \bbR^{\hat s}}
    \enspace .
    \nonumber
\end{equation}
After sign identification and a full update of coordinate descent we thus have:
\begin{equation}\label{eq:var_lasso}
    \left ( X_{:, \hat S}^\top X_{:, \hat S} \right )^{1/2}
    \jac_{\hat S}^{(t+1)}
    =
    A  \left ( X_{:, \hat S}^\top X_{:, \hat S} \right )^{1/2}  \jac_{\hat S}^{(t)}
    + b
    \enspace .
\end{equation}
    \begin{lemma}
        \begin{equation}
            \norm{A_s}_2 \leq 1
            \enspace ,
            \nonumber
        \end{equation}
        Moreover if $\norm{A^{(s)} x} = \norm{x}$ then
        \begin{equation}
            x \in
            \text{vect}\left (
                 \frac{\left ( X_{:, \hat S}^\top X_{:, \hat S} \right )^{1/2}}{\norm{X_{:, j_s}}} e_{j_s}
            \right )^\top
        \end{equation}
    \end{lemma}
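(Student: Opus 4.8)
The plan is to recognise each $A^{(s)}$ as an orthogonal projection, after which both assertions drop out with essentially no computation. Set $G \eqdef X_{:, \hat S}^\top X_{:, \hat S} \succ 0$ and $u \eqdef G^{1/2} e_{j_s} / \norm{X_{:, j_s}} \in \bbR^{\hat s}$, so that the operator appearing in the recursion \Cref{eq:var_lasso} is $A^{(s)} = \Id_{\hat s} - u u^\top$ (the self-adjoint conjugate of $A_s$ by $G^{1/2}$, which is why it is the spectral norm of $A^{(s)}$ that matters here). The first thing to check is that $u$ is a \emph{unit} vector: $\norm{u}_2^2 = (e_{j_s}^\top G\, e_{j_s}) / \norm{X_{:, j_s}}^2 = G_{j_s, j_s} / \norm{X_{:, j_s}}^2 = 1$, since the $(j_s, j_s)$ diagonal entry of $X_{:, \hat S}^\top X_{:, \hat S}$ is precisely $\norm{X_{:, j_s}}_2^2$ (for $\hat s = 1$ this degenerates to $A^{(s)} = 0$, which is trivial).

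Both claims then follow immediately. Because $\norm{u}_2 = 1$, $A^{(s)} = \Id_{\hat s} - u u^\top$ is the orthogonal projector onto $\{u\}^\perp$; it is symmetric with spectrum contained in $\{0,1\}$, hence $\norm{A^{(s)}}_2 \leq 1$. For the rigidity part I would simply expand
\[
    \normin{A^{(s)} x}_2^2 = \normin{x - (u^\top x)\, u}_2^2 = \norm{x}_2^2 - (u^\top x)^2 ,
\]
so that $\normin{A^{(s)} x}_2 = \norm{x}_2$ forces $u^\top x = 0$, i.e. $x \perp G^{1/2} e_{j_s}$, which is the stated conclusion $x \in \text{vect}\big(G^{1/2} e_{j_s} / \norm{X_{:, j_s}}\big)^\perp$.

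It remains to spell out how this local fact yields the contraction constant $C = \normin{A^{(\hat s)} \cdots A^{(1)}}_2 < 1$ of \Cref{prop:convergence_iterdiff}. A product of the non-expansive projectors $A^{(s)}$ is non-expansive; and if it preserved the norm of some $x \neq 0$, then applying the rigidity statement one factor at a time would force $x \perp G^{1/2} e_{j_s}$ for every $s = 1, \dots, \hat s$. Since $G^{1/2}$ is invertible and $\{e_{j_1}, \dots, e_{j_{\hat s}}\}$ spans $\bbR^{\hat s}$, this gives $x = 0$, a contradiction, hence $C < 1$ and the Jacobian iterates converge linearly. The only routine annoyance inside the lemma is the bookkeeping that identifies the support coordinates $j_1 < \dots < j_{\hat s}$ with the standard basis of $\bbR^{\hat s}$; the genuinely load-bearing step of the overall argument — and the reason the equality case, not merely the norm bound, belongs in the lemma — is exactly this passage from ``each $A^{(s)}$ is only non-expansive'' to ``the composite $A$ is a strict contraction''.
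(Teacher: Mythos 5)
Your proof is correct and follows essentially the same route as the paper: both arguments reduce to observing that $u = (X_{:, \hat S}^\top X_{:, \hat S})^{1/2} e_{j_s} / \norm{X_{:, j_s}}$ is a unit vector (because the $(j_s,j_s)$ diagonal entry of $X_{:, \hat S}^\top X_{:, \hat S}$ equals $\norm{X_{:, j_s}}^2$), so that $A^{(s)} = \Id_{\hat s} - uu^\top$ is the orthogonal projector onto $\{u\}^\perp$, giving both the norm bound and the equality case. Your explicit expansion $\normin{A^{(s)}x}^2 = \norm{x}^2 - (u^\top x)^2$ and the remark on how the rigidity statement feeds the strict contraction $\norm{A}_2 < 1$ match the paper's subsequent lemma.
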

\begin{proof}
        \begin{equation}
            \frac{\left ( X_{:, \hat S}^\top X_{:, \hat S} \right )^{1/2}}{\norm{X_{:, j_s}}}
         e_{j_s} e_{j_s}^\top
         \frac{\left ( X_{:, \hat S}^\top X_{:, \hat S} \right )^{1/2}}{\norm{X_{:, j_s}}}
         \nonumber
        \end{equation}
        is a symmetric rank 1 matrix, its non-zero eigenvalue is
        \begin{equation}
            e_{j_s}^\top
            \frac{\left ( X_{:, \hat S}^\top X_{:, \hat S} \right )^{1/2}}{\norm{X_{:, j_s}}}
            \frac{\left ( X_{:, \hat S}^\top X_{:, \hat S} \right )^{1/2}}{\norm{X_{:, j_s}}}
            e_{j_s}
            =
            e_{j_s}^\top
            \frac{ X_{:, \hat S}^\top X_{:, \hat S} }{\norm{X_{:, j_s}}^2 }
            e_{j_s} = 1 \enspace .
            \nonumber
        \end{equation}
        An eigenvector associated to this non-zeros eigenvalue is
        \begin{equation}
            \frac{\left ( X_{:, \hat S}^\top X_{:, \hat S} \right )^{1/2}}{\norm{X_{:, j_s}}} e_{j_s}
            \enspace .
            \nonumber
        \end{equation}
        $A_s$ is symmetric and real, is diagonalisable in an orthogonal basis, it has eigenvalue $1$ with multiplicity $\hat s - 1$ and eigenvalue $0$ with multiplicity 1.
        Moreover if $\norm{Ax} = \norm{x}$, then $x \in \text{vect}\left (
             \frac{\left ( X_{:, \hat S}^\top X_{:, \hat S} \right )^{1/2}}{\norm{X_{:, j_s}}} e_{j_s}
        \right )^\top$.
\end{proof}
    \begin{lemma} \label{lem:As}
        \begin{equation}
            \norm{A}_2 < 1
            \enspace .
            \nonumber
        \end{equation}
    \end{lemma}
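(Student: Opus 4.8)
The plan is to read off that each factor of $A$ is an orthogonal projection onto a hyperplane, deduce $\norm{A}_2 \le 1$ for free, and then upgrade this to a \emph{strict} inequality using that the normals of these hyperplanes span all of $\bbR^{\hat s}$. Concretely, write $u^{(s)} \eqdef (X_{:,\hat S}^\top X_{:,\hat S})^{1/2} e_{j_s} / \norm{X_{:,j_s}}_2$; by the preceding lemma $\norm{u^{(s)}}_2 = 1$, so $A^{(s)} = \Id_{\hat s} - u^{(s)} (u^{(s)})^\top$ is the orthogonal projection onto $(u^{(s)})^{\perp}$. Hence $\norm{A^{(s)} y}_2 \le \norm{y}_2$ for every $y$, with equality exactly when $y \perp u^{(s)}$ (equivalently $A^{(s)} y = y$), and submultiplicativity gives $\norm{A}_2 \le \prod_{s=1}^{\hat s} \norm{A^{(s)}}_2 \le 1$. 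Moreover, since $X_{\hat S}^\top X_{\hat S} \succ 0$ the matrix $(X_{:,\hat S}^\top X_{:,\hat S})^{1/2}$ is invertible, and $(e_{j_1}, \dots, e_{j_{\hat s}})$ is the canonical basis of $\bbR^{\hat s}$, so $(u^{(1)}, \dots, u^{(\hat s)})$ is a basis of $\bbR^{\hat s}$ and $\bigcap_{s=1}^{\hat s} (u^{(s)})^{\perp} = \{0\}$.

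The core of the argument is a standard ``equality throughout the chain'' reasoning. Fix $x \in \bbR^{\hat s}$ with $\norm{x}_2 = 1$, set $x^{(0)} = x$ and $x^{(s)} = A^{(s)} x^{(s-1)}$, so $x^{(\hat s)} = A x$. Suppose for contradiction that $\norm{A x}_2 = 1$. Then
\begin{equation*}
    1 = \norm{x^{(0)}}_2 \ge \norm{x^{(1)}}_2 \ge \dots \ge \norm{x^{(\hat s)}}_2 = \norm{A x}_2 = 1 ,
\end{equation*}
forcing every inequality to be an equality. In particular $\norm{A^{(1)} x}_2 = \norm{x}_2$, so by the preceding lemma $x \perp u^{(1)}$ and $x^{(1)} = A^{(1)} x = x$. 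Inductively, if $x^{(s-1)} = x$ then $\norm{A^{(s)} x}_2 = \norm{x^{(s)}}_2 = 1 = \norm{x}_2$, hence $x \perp u^{(s)}$ and $x^{(s)} = x$. After $\hat s$ steps, $x \perp u^{(s)}$ for every $s$, i.e. $x \in \bigcap_{s} (u^{(s)})^{\perp} = \{0\}$, contradicting $\norm{x}_2 = 1$. Thus $\norm{A x}_2 < 1$ for every unit vector $x$; since $x \mapsto \norm{A x}_2$ is continuous on the compact unit sphere, $\norm{A}_2 = \max_{\norm{x}_2 = 1} \norm{A x}_2 < 1$.

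The main obstacle is precisely this passage from $\norm{A}_2 \le 1$ to $\norm{A}_2 < 1$: each individual coordinate-update operator $A^{(s)}$ has operator norm exactly $1$, so no product bound is strict on its own, and one genuinely needs the full-rank structure of the family $(u^{(s)})_s$ (which is where $X_{\hat S}^\top X_{\hat S} \succ 0$ enters) together with the chain argument above. Once $\norm{A}_2 < 1$ is established, the proof of \Cref{prop:convergence_iterdiff} (BCD case) closes immediately: subtracting from \Cref{eq:var_lasso} the fixed-point identity satisfied by $(X_{:,\hat S}^\top X_{:,\hat S})^{1/2}\hat\jac_{\hat S}$ (which holds by \Cref{prop:closed_form_jac_lasso}, since $\hat\jac$ is stationary for the differentiated BCD updates) gives $(X_{:,\hat S}^\top X_{:,\hat S})^{1/2}(\jac_{\hat S}^{(k+1)} - \hat\jac) = A (X_{:,\hat S}^\top X_{:,\hat S})^{1/2}(\jac_{\hat S}^{(k)} - \hat\jac)$; taking $\ell_2$ norms and noting that $\norm{(X_{:,\hat S}^\top X_{:,\hat S})^{1/2} v}_2 = \norm{v}_{(X_{:,\hat S}^\top X_{:,\hat S})^{-1}}$ yields the announced linear rate with $C = \norm{A}_2 < 1$.
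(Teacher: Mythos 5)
Your proof is correct and follows essentially the same route as the paper: bound $\norm{A}_2 \le 1$ by submultiplicativity, then show that $\norm{Ax}_2 = \norm{x}_2$ forces $x$ to be orthogonal to every $u^{(s)} = (X_{:,\hat S}^\top X_{:,\hat S})^{1/2} e_{j_s}/\norm{X_{:,j_s}}_2$, which is impossible for $x \neq 0$ since these vectors span $\bbR^{\hat s}$ by positive definiteness of $X_{:,\hat S}^\top X_{:,\hat S}$. Your write-up is in fact slightly more careful than the paper's, in that you track the intermediate iterates $x^{(s)}$ explicitly (so that equality at stage $s$ is correctly applied to $x^{(s-1)}$ rather than to $x$ directly) and you invoke compactness of the unit sphere to pass from pointwise strict contraction to $\norm{A}_2 < 1$.
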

    \begin{proof}
    $A = A^{(\hat s)} \dots A^{(1)}$
    We have
    \begin{equation}
        \norm{A} \leq
        \underbrace{\normin{A^{(\hat s)}}}_{\leq 1}
        \dots
        \underbrace{\normin{A^{(1)}}}_{\leq 1}
        \leq 1
        \enspace .
        \nonumber
        \end{equation}
    Let $x \in \bbR^{\hat s}$ such that $\norm{Ax} = \norm{x}$, we thus have
    for all $s \in 1, \dots, \hat s$, $\norm{A^{(s)} x} = \norm{x}$.
    Using \Cref{lem:As} we have that
    for all $ s \in 1, \dots, \hat s$
    $x \in \text{vect}\left (
        \frac{\left ( X_{:, \hat S}^\top X_{:, \hat S} \right )^{1/2}}{\norm{X_{:, j_s}}} e_{j_s}
   \right )^\top$,
   \ie
    $x \in \text{vect}\left (
       \left ( X_{:, \hat S}^\top X_{:, \hat S} \right )^{1/2}
   \right )^\top = \{ 0 \}$
   because
   $ X_{:, \hat S}^\top X_{:, \hat S}  \succ 0$
    \end{proof}
   Using \Cref{eq:var_lasso} we have:
    \begin{equation}
        \normin{\jac_{\hat S}^{(t+1)} - \hat \jac}_{(X_{:, \hat S}^{\top} X_{:, \hat S})^{-1}}
        \leq
        \normin{A}_2
        \normin{\jac_{\hat S}^{(t)} - \hat \jac}_{(X_{:, \hat S}^{\top} X_{:, \hat S})^{-1}}
        \enspace ,
    \end{equation}
    with $\norm{A}_2 < 1$, which leads to the desire result.
    Since the recursion of the Jacobian sequences of \Cref{alg:compute_jac_imp_forward_iter_diff} and
    \Cref{alg:compute_jac_imp_forward_iter_diff}
    are the same once the support is identified, the proof of convergence of \Cref{alg:compute_jac_imp_forward_iter_diff} is the same (provided that support identification has been achieved).
\end{proof}

\begin{proof} (wLasso case, BCD)
    As for the Lasso case:
\begin{align}
    \tilde \jac_{j_s,:}^{(s)}
    &=
    \tilde \jac_{j_s,:}^{(s-1)}
    - \frac{1}{\norm{X_{:, j_s}}^2} X_{:, j_s}^\top X \tilde \jac^{(s-1)}
    - \frac{1}{\norm{X_{j_s}}^2} \sign \beta_{j_s}e_{j_{s}} e_{j_{s}}^\top
    \quad \text{after sign identification we have:}
    \nonumber
    \\
    \tilde \jac_{j_s,\hat S}^{(s)} &=
    \tilde \jac_{j_s, \hat S}^{(s-1)}
    - \frac{1}{\norm{X_{:, j_s}}^2} X_{:, j_s}^\top X_{:, \hat S} \tilde \jac_{\hat S, \hat S}^{(s-1)}
    - \frac{1}{\norm{X_{:, j_s}}^2} \sign \hat \beta_{j_s} e_{j_{s}} e_{j_{s}}^\top
    \nonumber
    \\
    (X_{:, \hat S}^\top X_{:, \hat S})^{1/2} \tilde \jac_{\hat S, \hat S}^{(s)}
    &=
    \underbrace{
        \left ( \Id_{n} - \frac{(X_{:, \hat S}^\top X_{:, \hat S})^{1/2} e_{j_s} e_{j_s}^\top (X_{:, \hat S}^\top X_{:, \hat S})^{1/2}}{\norm{X_{:, j_s}}^2} \right )
        }_{A^{(s)}}
     (X_{:, \hat S}^\top X_{:, \hat S})^{1/2} \tilde \jac_{\hat S, \hat S}^{(s-1)}
     \nonumber
    \\
    &- \underbrace{\frac{\sign \hat \beta_{j_s}}{\norm{X_{:, j_s}}^2} (X_{:, \hat S}^\top X_{:, \hat S})^{1/2} }_{B^{(s)}} e_{j_{s}} e_{j_{s}}^\top \nonumber
\end{align}
\begin{equation} \label{eq:var_wlasso}
    (X_{:, \hat S}^\top X_{:, \hat S})^{1/2}
    \tilde \jac_{\hat S, \hat S}^{(\hat s)}
    =\underbrace{A^{(\hat s)}
    \dots A^{(1)} }_{A \in \bbR^{ \hat s \times \hat s}}
    (X_{:, \hat S}^\top X_{:, \hat S})^{1/2}
    \tilde \jac_{\hat S, \hat S}^{(0)}
    + \underbrace{
    A^{(\hat s)} \dots A^{(2)} B^{(1)} e_{j_1} e_{j_1}^\top
    + \dots
    + B^{(\hat s)} e_{j_{\hat s}} e_{j_{\hat s}}^\top }_{D \in \bbR^{\hat s\times \hat s}} \enspace .
\end{equation}
As in the Lasso case, \Cref{eq:var_wlasso} leads to linear convergence once the support is identified for \Cref*{alg:compute_jac_forward_diff_bcd,alg:compute_jac_imp_forward_iter_diff}.
\end{proof}

\section{Block coordinate descent algorithms}

\Cref{alg:compute_jac_forward_diff_bcd} presents the forward iteration scheme which computes iteratively the solution of the Lasso or wLasso jointly with the Jacobian computation.
This is the \textit{naive} way of computing the Jacobian without taking advantage of its sparsity.
Eventually, it requires to differentiate every lines of code \wrt to $\lambda$ and take advantage of the BCD updates for cheap updates on the Jacobian as well.

{\fontsize{4}{4}\selectfont
\begin{algorithm}[h]
\SetKwInOut{Input}{input}
\SetKwInOut{Init}{init}
\SetKwInOut{Parameter}{param}
\caption{\textsc{Forward Iterdiff}
\cite{Deledalle_Vaiter_Fadili_Peyre14,Franceschi_Donini_Frasconi_Pontil17} }
\Input{
    $X \in \bbR^{n \times p},
    y \in \bbR^{n},
    \lambda \in \bbR,
    n_{\text{iter}} \in \bbN$
    }
\tcp{jointly compute coef. \& Jacobian }
$\beta = 0$ \tcp*[l]{ potentially warm started}

$\jac= 0$ \tcp*[l]{potentially warm started}

$r = y - X \beta$

$dr = - X \jac$

    \For{$k = 0,\dots, n_{\text{iter}} - 1$}{
        \For{$j = 0, \dots, p-1$}{
            \tcp*[h]{update the regression coefficients}\\
            $\beta_{\text{old}} = \beta_j$

            $z_j = \beta_j + \frac{1}{\norm{X_{:,j}}^{2}} X_{:, j}^\top r$
            \tcp*[l]{gradient step}
            $\beta_j = \ST(z_j, n e^\lambda / \norm{X_{:,j}}^{2} )$
            \tcp*[l]{proximal step}
            $r \minuseq X_{:, j} (\beta_j - \beta_{\text{old}})$

            \tcp*[h]{update the Jacobian}\\
            \If{Lasso}{
            $\jac_{old} = \jac_j$

            $\jac_j =
             \left | \sign \beta_j \right |
             \left(
                 \jac_j +
                \frac{1}{\norm{X_{:,j}}^{2}} X_{:, j}^\top dr\right)$
            \tcp*[l]{diff. \wrt $\lambda$}
            $\jac_j \minuseq
            \frac{n e^\lambda}{\norm{X_{:,j}}^{2}} \sign \beta_j$
            \tcp*[l]{diff. \wrt $\lambda$}
            $dr_j \minuseq X_{:, j} (\jac_{j} - \jac_{old})$
            %
            }
         \If{wLasso}{
            $\jac_{old} = \jac_{j, :}$

            $\jac_{j, :} =
            | \sign \beta_j |
             \left (\jac_{j, :}
             + \frac{1}{\norm{X_{:,j}}^{2}} X_{:, j}^\top dr \right)$
             \tcp*[l]{diff. \wrt $\lambda_1, \dots,\lambda_p$}
            $\jac_{j, j} \minuseq
            \frac{n e^{\lambda_j}}{\norm{X_{:,j}}^{2}} \sign \beta_j$
            \tcp*[l]{diff. \wrt $\lambda_1, \dots,\lambda_p$}
            $dr \minuseq X_{:, j} (\jac_{j} - \jac_{old})$
            %
        }
        }
    }
\Return{
    $\beta^{ n_{\text{iter}}}, \jac_{(\lambda)}^{ n_{\text{iter}}}$
    }
\label{alg:compute_jac_forward_diff_bcd}
\end{algorithm}
}
%
\Cref{alg:compute_jac_backward_iter_diff} describes the backward iterative differentiation algorithm used for benchmark.
Backward differentiation requires the storage of every updates on $\beta$. As \Cref{fig:intro_influ_niter} shows, this algorithm is not efficient for our case because the function to differentiate $f: \bbR \rightarrow \bbR^{p}$ ( $f: \bbR^{p} \rightarrow \bbR^{p}$, for the wLasso) has a higher dimension output space than the input space.
The storage is also an issue mainly for the wLasso case which makes this algorithm difficult to use in practice in our context.

{\fontsize{4}{4}\selectfont
\begin{algorithm}[h]
\SetKwInOut{Input}{input}
\SetKwInOut{Init}{init}
\SetKwInOut{Parameter}{param}
\caption{\textsc{Backward Iterdiff}  \citep{Domke12}}
\Input{
$X \in \bbR^{n \times p},
y \in \bbR^{n},
\lambda \in \bbR,
n_{\text{iter}} \in \bbN$
}
\tcp{backward computation of $\hbeta$ and $\hat \jac_{(\lambda)}^{\top} \alpha$}

$\beta = 0$ \tcp*[l]{potentially warm started}

\tcp*[l]{compute the regression coefficients and store the iterates}
\For{$k = 0,\dots, n_{\text{iter}} - 1$}{
    \For{$j = 0, \dots, p-1$}{
        $\beta_{\text{old}} = \beta_j$

        $z_j = \beta_j + \frac{1}{\norm{X_{:,j}}^{2}} X_{:, j}^\top r$
        \tcp*[l]{gradient step}

       $\beta_j = \ST(z_j, n e^\lambda / \norm{X_{:,j}}^{2} )$
        \tcp*[l]{proximal step}

        $r \minuseq X_{:, j} (\beta_j - \beta_{\text{old}})$
    }

    }

\tcp{Init. backward differentiation}

$g = 0$ \tcp*[l]{$g$ stores $\hat \jac_{\lambda}^{\top}\alpha$}

\tcp*[l]{compute the Jacobian}
\For{$k = n_{\text{iter}}$ \textbf{down to} $1$}{
    \For{$j = 0, \dots, p-1$}{
        \If{Lasso}{
        $g \minuseq \frac{n e^\lambda}{\norm{X_{:,j}}^{2}} \alpha_j
        \sign \beta_j^{(k)} $

        $\alpha_j \timeseq |\sign \beta_j^{(k)}| $

        $\alpha
        \minuseq \frac{1}{\norm{X_{:,j}}^{2}}\alpha_j X_{:, j}^{\top} X $
        \tcp*[l]{$\bigo(np)$}
        }
        \If{wLasso}{
            $g_j \minuseq \frac{n e^{\lambda_j}}{\norm{X_{:,j}}^{2}} \alpha_j
            \sign \beta_j^{(k)} $

            $\alpha_j \timeseq |\sign \beta_j^{(k)}| $

            $\alpha
            \minuseq \frac{1}{\norm{X_{:,j}}^{2}}\alpha_j X_{:, j}^{\top} X $
        }
    }

}
\Return{
    $\beta^{ n_{\text{iter}}}, g^{(1)}$
    }
\label{alg:compute_jac_backward_iter_diff}
\end{algorithm}
}
%
%
%
\Cref{alg:bcd_lasso} presents the classical BCD iterative scheme for solving the Lasso problem using the composition of a gradient step with the soft-thresholding operator.
{\fontsize{4}{4}\selectfont
\begin{algorithm}[h]
\SetKwInOut{Input}{input}
\SetKwInOut{Init}{init}
\SetKwInOut{Parameter}{param}
\caption{\textsc{BCD for the Lasso}  \citep{Friedman_Hastie_Tibshirani10}}
\Input{
    $X \in \bbR^{n \times p},
    y \in \bbR^{n},
    \lambda \in \bbR,
    \beta^{(0)} \in \bbR^{p},
    n_{\text{iter}} \in \bbN$
    }

$\beta = \beta^{(0)}$ \tcp*[l]{warm start}

\For{$k = 0,\dots, n_{\text{iter}} - 1$}{
    \For{$j = 0, \dots, p-1$}{
        $\beta_{\text{old}} = \beta_j$

        $z_j = \beta_j + \frac{1}{\norm{X_{:,j}}^{2}} X_{:, j}^\top r$
        \tcp*[l]{gradient step}

       $\beta_j = \ST(z_j, n e^\lambda / \norm{X_{:,j}}^{2} )$
        \tcp*[l]{proximal step}

        $r \minuseq X_{:, j} (\beta_j - \beta_{\text{old}})$
        }
    }
\Return{
    $\beta^{ n_{\text{iter}}}$
    }
\label{alg:bcd_lasso}
\end{algorithm}
}



\clearpage
\section{Derivations for MCP}
\label{app:sec_mcp}

Let us remind the definition of the Minimax Concave Penalty (MCP) estimator introduced by \citet{Zhang10}, also analyzed under the name CELE0 by \citet{Soubies_Blanc-FeraudAubert16}.
First of all, for any $t\in\bbR$:
\begin{align}
    p_{\lambda, \gamma}^{\text{MCP}}(t)=
    \begin{cases}
        {\lambda |t|-\frac{t^{2}}{2 \gamma},} & {\text { if } |t| \leq \gamma \lambda} \\
        {\frac{1}{2} \gamma \lambda^2,}
        & {\text { if } |t| > \gamma \lambda}
        \enspace.
    \end{cases}
\end{align}
The proximity operator of $p_{\lambda, \gamma}$ for parameters $\lambda>0$ and $\gamma>1$ is defined as follow (see \citealt[Sec. 2.1]{Breheny_Huang11}):
\begin{align}
    \prox^{\text{MCP}}_{\lambda, \gamma}(t)=
    \begin{cases}
        {\frac{\ST(t, \lambda)}{1 - \frac{1}{\gamma}}} & {\text { if } |t| \leq \gamma \lambda} \\
        {t} & {\text { if } |t| > \gamma \lambda}
        \enspace.
    \end{cases}
\end{align}
For ourselves we choose as for the Lasso an exponential parametrization of the coefficients, for $\lambda \in \bbR$ and $\gamma > 0$:
\begin{align}
    \hat \beta^{(\lambda, \gamma)}(y)
    \eqdef \argmin_{\beta \in \bbR^p}
    \frac{1}{2n} \normin{y - X \beta}_2^2
    + \sum_{j=1}^{p} p_{e^\lambda, e^\gamma}^{\text{MCP}}\left(\left|\beta_{j}\right|\right)
    \enspace.
\end{align}
%

%

\paragraph{Update rule for Coordinate Descent}

Below, we provide equation to update the coefficient in the coordinate descent algorithm of the MCP:
\begin{align}
\beta_j
    & \leftarrow
    \argmin_{\beta_j \in \bbR}
    \frac{1}{2n} \normin{y - \beta_j X_{:, j} - \sum_{j'\neq j} \beta_{j'} X_{:, j'}}_2^2
    +
    \sum_{j'\neq j}^p
    p_{e^\lambda, e^\gamma}^{\text{MCP}}(\beta_{j'})
    +
    p_{e^\lambda, e^\gamma}^{\text{MCP}}(\beta_{j})
    \nonumber\\
    & =
    \argmin_{\beta_j \in \bbR}
    \frac{1}{2n} \normin{y - \beta_j X_{:, j} - \sum_{j'\neq j} \beta_{j'} X_{:, j'}}_2^2
    +p_{e^\lambda, e^\gamma}^{\text{MCP}}(\beta_{j})
    \nonumber\\
    & =
    \argmin_{\beta_j \in \bbR}
    \norm{X_{:, j}}_2^2
    \left(
        \frac{1}{2n}
        \left[
        \beta_j
        -
        \frac{1}{\norm{X_{:, j}}_2^2}
            \left \langle
            y-\sum_{j'\neq j} \beta_{j'} X_{:, j'}, X_{:, j}
            \right \rangle
        \right]^2
        +
        \frac{1}{\norm{X_{:, j}}_2^2}
        p_{e^\lambda, e^\gamma}^{\text{MCP}}
    (\beta_j)
    \right)\nonumber\\
    & =
    \argmin_{\beta_j \in \bbR}
    \left(
        \frac{1}{2n}
        \left[
        \beta_j
        -
        \frac{1}{\norm{X_{:, j}}_2^2}
            \left \langle
            y-\sum_{j'\neq j} \beta_{j'} X_{:, j'}, X_{:, j}
            \right \rangle
        \right]^2
        +
        \frac{1}{\norm{X_{:, j}}_2^2}
        p_{e^\lambda, e^\gamma}^{\text{MCP}}(\beta_j)
    \right) \nonumber\\
    & =  \argmin_{\beta_j \in \bbR}
    \left(
        \frac{1}{2L_j}
        \left[
        \beta_j
        -
        \frac{1}{\norm{X_{:, j}}_2^2}
            \left \langle
            y-\sum_{j'\neq j} \beta_{j'} X_{:, j'}, X_{:, j}
            \right \rangle
        \right]^2
    +
    p_{e^\lambda, e^\gamma}^{\text{MCP}}(\beta_{j})
    \right),
    \text{with $L_j \eqdef \frac{n}{\norm{X_{:, j}}_2^2}$}
    \nonumber\\
    & = \prox_{e^\lambda / L_j, e^\gamma L_j}^{\text{MCP}}
    \left(
        \beta_j-\frac{1}{\norm{X_{:, j}^2}} X_{:, j}^\top(X\beta-y), \lambda
    \right)
    \enspace .
\end{align}

One can write the following fixed point equation satisfied by the estimator $\ \hat \beta$, with $L_j = \norm{X_{:, j}}^2 / n$:
\begin{align}\label{eq:fixed_point}
    \hbeta_j &=
    \prox_{e^\lambda / L_j, e^\gamma L_j}^{\text{MCP}}
    \left(
        \left\langle
         y-\sum_{k\neq j} \hbeta_k X_{:, k}, \frac{X_{:, j}}{\norm{X_{:, j}}^2}
        \right\rangle
    \right)\nonumber\\
    & =
    \prox_{e^\lambda / L_j, e^\gamma L_j}^{\text{MCP}}
    \left(
        \hbeta_j
        -\frac{1}{\norm{X_{:, j}}^2}
        X_{:, j}^\top\left(
         X\hbeta-y
        \right)
    \right) \enspace .
\end{align}
Since the MCP penalty is non-convex, the estimator may not be continuous \wrt hyperparameters and gradient based hyperparameter optimization may not be theoretically justified.
However we can differentiate the fixed point equation \Cref{eq:fixed_point} almost everywhere:
\begin{align}
    \hat \jac_{j} =&
        \left(
            \hat\jac_{j} - \frac{1}{\norm{X_{:j}}^{2}_{2}}X^{\top}_{:j}X\hat\jac
        \right)
    \cdot
    \frac{
        \partial
        \prox_{e^\lambda / L_j, e^\gamma L_j}^{\text{MCP}}
    }{\partial t}
    \left(\hbeta_j-\frac{1}{\norm{X_{:, j}^2}} X_{:, j}^\top(X\beta-y)\right) \nonumber\\
    &+
    \frac{e^\lambda}{L_j}
    \frac{
        \partial \prox_{e^\lambda / L_j, e^\gamma L_j}^{\text{MCP}}
    }{\partial \lambda}\left(\hbeta_j-\frac{1}{\norm{X_{:, j}^2}} X_{:, j}^\top(X\beta-y)\right) \nonumber\\
    &+
    e^\gamma L_j
    \frac{
        \partial \prox_{e^\lambda / L_j, e^\gamma L_j}^{\text{MCP}}
    }{\partial \gamma}\left(\hbeta_j-\frac{1}{\norm{X_{:, j}^2}} X_{:, j}^\top(X\beta-y)\right)
    \enspace.
\end{align}
where
\begin{align}
        \frac{
            \partial
            \prox_{\lambda, \gamma}^{\text{MCP}}
        }{\partial t}(t)
    &=
    \begin{cases}
        \frac{|\sign t |}{1- \frac{1}{{\gamma}}} , \quad  &\text{if } |t| \leq \lambda  \gamma\\
        1, \quad  &\text{otherwise}
    \end{cases} \enspace, \\
    \frac{
        \partial
        \prox_{\lambda, \gamma}^{\text{MCP}}
    }{\partial \lambda}(t)
    &=
    \begin{cases}
        0, \quad
        &\text{ if } |t| \leq \lambda\\
        - \frac{\sign t}{1- \frac{1}{{\gamma}}}, \quad
        &\text{ if }  \lambda\leq |t| \leq \lambda \gamma\\
        0, \quad
        &\text{ if } |t| > \lambda \gamma
        \end{cases}
        \enspace, \\
    \frac{
        \partial
        \prox_{\lambda, \gamma}^{\text{MCP}}
    }{\partial \gamma}(t)
    &=
    \begin{cases}
            - \frac{\ST\left(
            t, \lambda
            \right)}{
            \left(
                \gamma - 1
            \right)^2} \quad
        &\text{ if } |t| \leq \lambda \gamma \\
        0
        &\text{ if } |t| > \lambda \gamma\\
    \end{cases}
     \enspace.
\end{align}
%
%
\begin{figure}[tb]
    \begin{center}
  \begin{subfigure}[b]{0.45\textwidth}
      \centering
      \includegraphics[width=\figsize\linewidth]{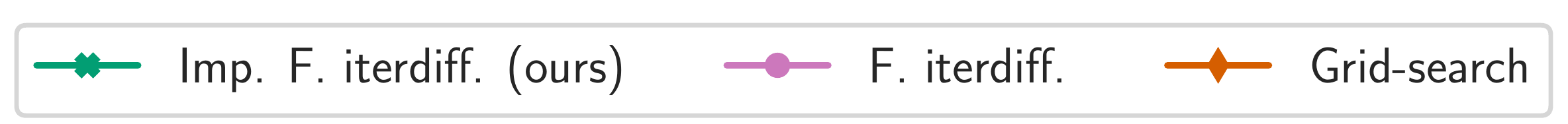}
      \includegraphics[width=\figsize\linewidth]{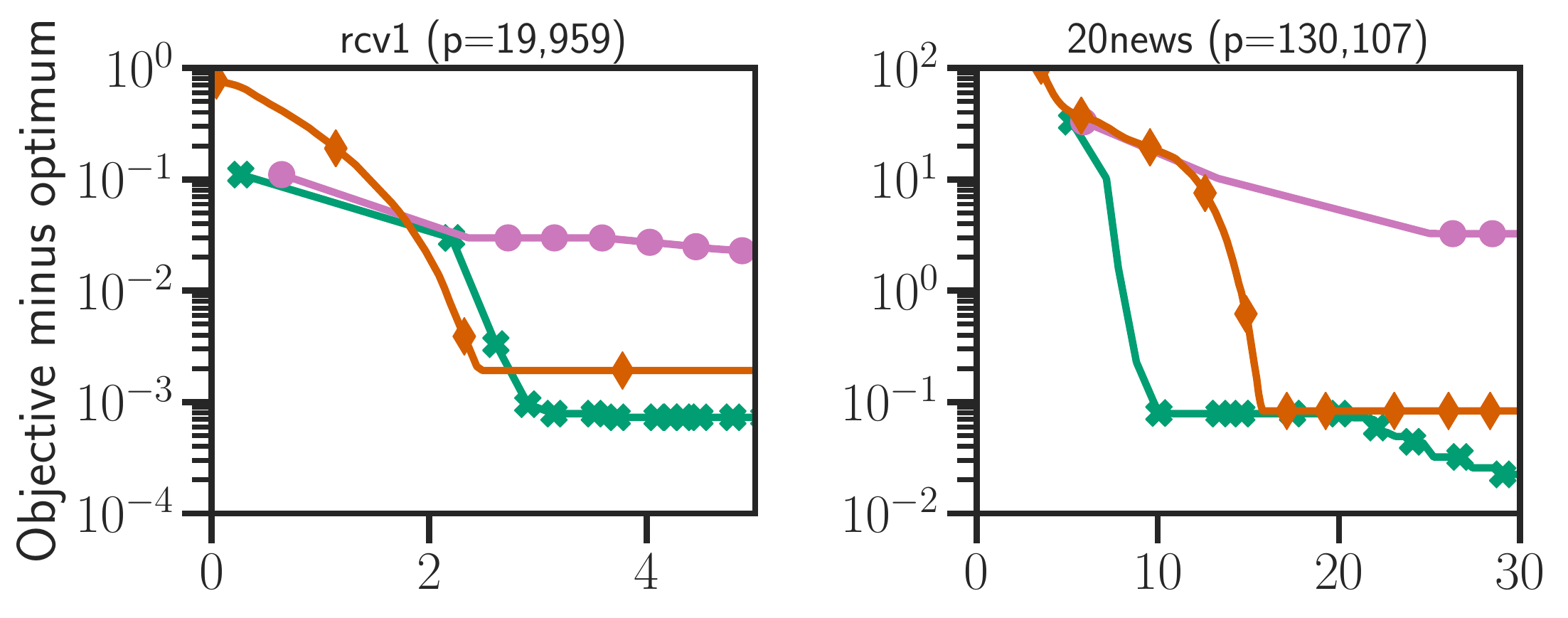}
  \end{subfigure}%

  \begin{subfigure}[b]{0.45\textwidth}
        \includegraphics[width=\figsize\linewidth]{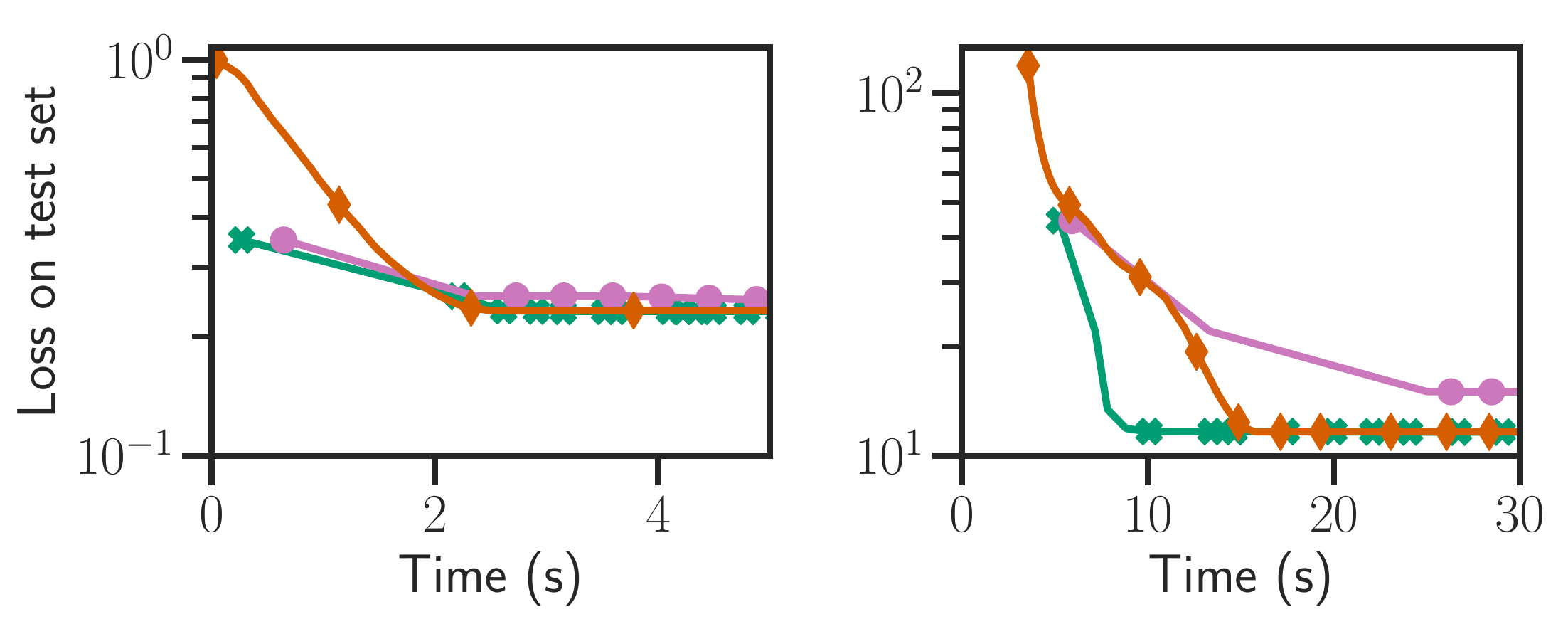}
    \end{subfigure}%
\end{center}

    \caption{\textbf{Computation time for the HO of the MCP on real data} Distance to ``optimum'' (top) and performance (bottom) on the test set for the MCP.}
  \label{fig:mcp_train_test_perf}

\end{figure}
Contrary to other methods, HO based algorithms do not scale exponentially in the number of hyperparameters.
Here we propose experiments on the held-out loss with the MCP estimator \citep{Zhang10}, which has 2 hyperparameters $\lambda$ and $\gamma$.
Our algorithm can generalize to such non-smooth proximity-based estimator.

\textit{Comments on \Cref{fig:mcp_train_test_perf}
(MCP,
held-out criterion).
}
\Cref{fig:mcp_train_test_perf} (top) shows the convergence of the optimum on 2 datasets (\emph{rcv1} and \emph{20news}) for the MCP estimator.
As before \ourfull outperforms \forwardfull illustrating \Cref{prop:convergence_iterdiff} and \Cref{tab:summary_costs}.

\clearpage

\section{Datasets and implementation details}
\label{app:dataset}

The code used to produce all the figures as well as the implementation details can be found in the supplementary material in the $forward\_implicit/expes$ folder.
In particular in all experiments, for our algorithm, \ourfull, the size of the loop computing the Jacobian is fixed: $\text{n}\_\text{iter}\_\text{jac} = 100$.
Reminding that the goal is to compute the gradient:
\begin{equation}
    \hat \jac_{(\lambda)}^\top \nabla \mathcal{C} \left ( \hat \beta^{(\lambda)} \right ) \enspace,
\end{equation}
we break the loop if
\begin{equation}
    \normin{(\jac^{(k+1)} - \jac^{(k)})\nabla \mathcal{C} ( \hat \beta^{(\lambda)} ) }
    \leq
    \normin{\nabla \mathcal{C} (\hat \beta^{(\lambda)} )} \times \epsilon^{\text{jac}} \enspace,
\end{equation}
with $\epsilon^{\text{jac}} = 10^{-3}$.
All methods benefit from warm start.

\subsection{Details on \Cref{fig:intro_influ_niter}}
%
\Cref{fig:intro_influ_niter} is done using synthetic data.
As described in \Cref{sub:sugar}, $X \in \bbR^{n \times p}$ is a Toeplitz correlated matrix, with correlation coefficient
$\rho = 0.9$, $(n, p) = (1000, 2000)$.
$\beta \in \bbR^p$ is chosen with $5$ non-zero coefficients chosen at random.
Then $y \in \bbR^n$ is chosen to be equal to $X \beta$ contaminated by some \iid random Gaussian noise, we chose $\text{SNR}=3$.
For \Cref{fig:intro_influ_niter} all the implementation details can be found in the joint code in the
$forward\_implicit/examples/plot\_time\_to\_compute\_single\_gradient.py$
file.
\Cref{fig:intro_influ_niter} shows the time of computation of one gradient and the distance to ''optimum''.
For this figure we evaluated the gradient in  $\lambda = \lambda_{\text{max}} - \ln(10)$.
The ''optimum'' is the gradient obtained using the implicit differentiation method.

\subsection{Details on \Cref{fig:Lasso_train_test_perf}}
%
Let us first begin by a description of all the datasets and where they can be downloaded.

\paragraph{\emph{rcv1}.}
The \emph{rcv1} dataset can be downloaded here:
\url{https://www.csie.ntu.edu.tw/~cjlin/libsvmtools/datasets/multilabel.html#rcv1v2%20(topics;%20subsets)}.
The dataset contains $n=20,242$ samples and  $p=19,959$ features.

\paragraph{\emph{20news}.}
The \emph{20news} dataset can be downloaded here
\url{https://www.csie.ntu.edu.tw/~cjlin/libsvmtools/datasets/multiclass.html#news20}.
The dataset contains $n= 11,314$
samples and  $p=130,107$ features.

\paragraph{\emph{finance}.}
The \emph{finance} (\emph{E2006-log1p} on libsvm) dataset can be downloaded here:
\url{https://www.csie.ntu.edu.tw/~cjlin/libsvmtools/datasets/regression.html#E2006-log1p}.
The dataset contains $n= 16,087$
samples and  $p=1,668,737$ features.

All the implementation details can be found in the code:
$forward\_implicit/expes/main\_lasso\_pred.py$.

\subsection{Details on \Cref{fig:lasso_estimation}}
%
\Cref{fig:lasso_estimation} was performed using simulated data. The matrix $X\in \bbR^{n\times p}$ was obtained by simulated $n\times p$ \iid Gaussian variables $\mathcal{N}(0,1)$.
The number of rows was fixed at $n=100$ and we changed the number of columns $p$ from 200 to 10,000 on a linear grid of size 10.
Then , we generated $\beta^{*}$ with 5 coefficients equal to 1 and the rest equals to 0.
The vector $y$ is equal to $X\beta^{*}$ contaminated by some \iid random Gaussian noise controlled by a SNR value of 3.
We performed 50 repetitions for each value of $p$ and computed the average MSE on these repetitions.
The initial value for the line-search algorithm was set at $\lambda_{\text{max}} + \ln(0.7)$ and the number of iterations for the Jacobian at $500$ for the whole experiment.
All the implementation details can be found in the code : $forward\_implicit/expes/main\_lasso\_est.py$.

\subsection{Details on \Cref{fig:Lasso_vs_wLasso}}
%
\Cref{fig:Lasso_vs_wLasso} was performed using the same simulating process as described above only this time we performed only 25 repetitions for each value of p.
We had to deal with the fact that \Cref{eq:bilevel_opt} is not convex for the weighted Lasso which means that our line-search algorithm could get stuck in local minima.
In order to alleviate this problem, we introduced \Cref{eq:l2_bilevel_opt} to obtain an initial point for the line-search algorithm.
We chose the regularization term to be constant and equals to $C(\beta^{(\lambda_{\text{max}})})/10$.
We used a time treshold of $500$ seconds which was hit only by the forward differentiation algorithm for the wLasso.
The details about this experiment can be found in the code : $forward\_implicit/expes/main\_wLasso.py$.

\clearpage

\section{Supplementary experiments}
\label{app:expe_sppl}

\subsection{Experiments with a non-unique solution to the inner problem}
\label{app:non_unique}

We recall here that the bi-level optimization \Cref{eq:bilevel_opt} is solved using gradient descent.
We recall also that gradient descent may not converge toward a global minima since the optimized function  $\lambda \mapsto \cL(\lambda)$ may not be convex.
It may be even worse: if the inner optimization problem has not a unique solution, the function $\lambda \mapsto \cL(\lambda)$ may not be continuous.
However our algorithm can still be applied to compute the hypergradient.
\Cref{fig:intro_influ_niter_non_unique} shows the time to compute a single (hyper)gradient when the solution to the inner problem is not unique.

As proved for instance in \citet[Lemma 3 and 4]{Tibshirani13},
the set of parameters where the Lasso solution is not unique is typically $\emptyset$ or a set whose Lebesgue measure is zero. Moreover, there exist settings such that the solution path (as a multivalued mapping) could be non-continuous, which leaves only non-gradient based methods available. Thus, we decided to not investigate the theory in such pathological settings. The authors are not aware of a classical dataset where non-uniqueness arises.
Nevertheless, in the case where there exists $\lambda$ such that the solution set is not reduced to a singleton, our proposed algorithm can still be applied to any solution without theoretical guarantees.

\textbf{Experimental setting for non-uniqueness.}
For completeness, we run our methods on the following toy example \citet{Tibshirani13}: we consider a design $X$ such that $n = 100$, $p = 10000$ and
$X_1, X_2, X_3$
are generated iid following a standard normal distribution,
$X_4 = (X_2 + X_3)/2$ and $X_5, \dots, X_{p}$ are generated \iid following a standard normal distribution, then orthogonalized $X_5, \dots, X_p$ \wrt $\Span(X_1, \dots, X_4)$.
We let $y = -X1 + X2 + X3$.
We let the reader observe that when $\lambda = 1 / n$, the solution set is not reduced to a singleton. In this case, similar conclusions are observed in \Cref{fig:intro_influ_niter_non_unique} as for other datasets: Imp. F. Iterdiff (ours) still outperforms its competitors.
%
%
\begin{figure}[H]
    \includegraphics[width=0.8\linewidth]{legend_intro_influ_niter}
    \centering
      \includegraphics[width=0.8\linewidth]{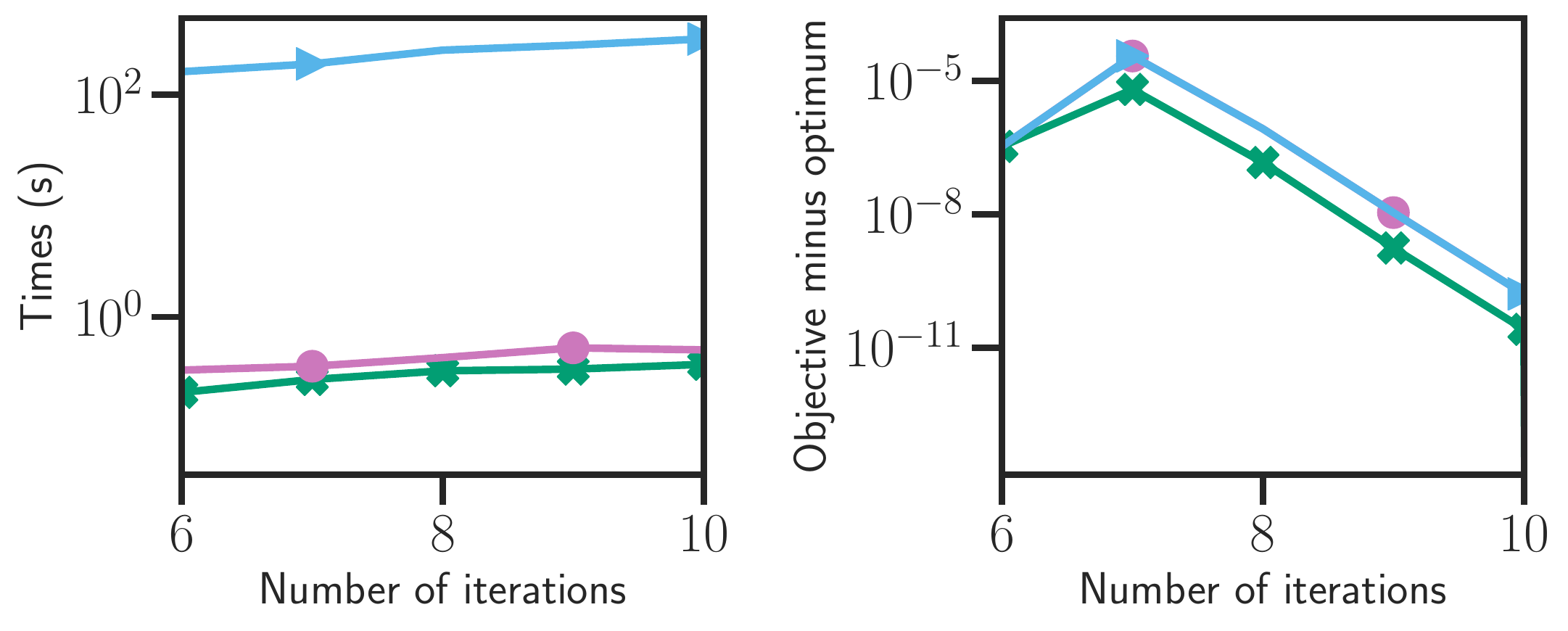}
  \caption{\textbf{Time to compute a single gradient with non-unique solution} (Synthetic data, Lasso, $n, p = 1000, 10000$).
  Influence on the number of iterations of BCD (in the inner optimization problem of \Cref{eq:bilevel_opt}) on the computation time (left) and the distance to ``optimum'' of the gradient $ \nabla_\lambda \cL(\lambda) $(right) for the Lasso estimator.
  }
  \label{fig:intro_influ_niter_non_unique}
\end{figure}
%

\end{document}